\documentclass{article}
\usepackage{arxiv}

\usepackage[utf8]{inputenc} % allow utf-8 input
\usepackage[T1]{fontenc}    % use 8-bit T1 fonts
\usepackage{hyperref}       % hyperlinks
\usepackage{url}            % simple URL typesetting
\usepackage{booktabs}       % professional-quality tables
\usepackage{amsfonts}       % blackboard math symbols
\usepackage{nicefrac}       % compact symbols for 1/2, etc.
\usepackage{microtype}      % microtypography
\usepackage{lipsum}		% Can be removed after putting your text content
\usepackage{graphicx}
\usepackage{natbib}
\usepackage{doi}
\usepackage{placeins} % added by alireza

% \usepackage[pdftex]{graphicx} 
% \usepackage{biblatex}
% \addbibresource{Bibliography.bib}
% \setlength{\parindent}{4em}
% \setlength{\parskip}{1em}

\usepackage{amsmath, amssymb}
\usepackage{yfonts}
\usepackage{commath}
\usepackage{longtable}
\usepackage{subcaption}

% added by me (Alireza)
\usepackage{float}
\usepackage{amsthm}
\usepackage{dsfont}
\usepackage{xcolor, tikz}
\usepackage{hyperref}
\usepackage{fontawesome}
\usepackage{mathtools}
\usepackage{cleveref}
\usepackage{mathabx}
\usepackage{array, makecell}
\usepackage{footnote}
\makesavenoteenv{tabular}
\makesavenoteenv{table}

 \newcommand{\reals}{\mathbb{R}}
  
  \newcommand{\Ex}{\mathbb{E}}
  \renewcommand{\Pr}{\mathbb{P}}
  \newcommand{\X}{{X}}
  \newcommand{\Y}{{Y}}
  \newcommand{\F}{\mathcal{F}}
  
   \newcommand{\Lo}[1]{{\mathcal L_{#1}}}
   \newcommand{\bLo}[1]{{\mathcal L^{0/1}_{#1}}}
   \newcommand{\rmse}[1]{{\mathcal L^{\mathrm{RMSE}}_{#1}}}
   \newcommand{\bopt}[1]{{\mathrm{opt}^{0/1}_{#1}}}
   
   \newcommand{\lo}{\ell}
   \newcommand{\blo}{\ell^{0/1}}
  \newcommand{\ce}[1]{{\mathrm{CE}_{#1}}}
  \newcommand{\pc}[1]{{\mathrm{PC}_{#1}}}
   \newcommand{\indct}[1]{\mathds{1}\left[{#1}\right]}
  \DeclareMathOperator*{\E}{\mathbb{E}}
  \newcommand{\supp}[1]{\mathrm{supp}(#1)}
  \newcommand{\range}[1]{\mathrm{range}_{#1}}
 \newcommand{\auc}{\mathrm{AUC}}

\newtheorem{definition}{Definition}[section]
\newtheorem{theorem}{Theorem}
\newtheorem{observation}[theorem]{Observation}
\newtheorem{corollary}{Corollary}
\newtheorem{example}{Example}

\title{Calibration through the Lens of Interpretability}

 \date{}	% Here you can change the date presented in the paper title
%\date{} 					% Or removing it

\author{Alireza Torabian\\
    EECS Department\\
	York University\\
    Toronto, Canada\\
	\texttt{talireza@yorku.ca} \\
	%% examples of more authors
	\And
	\href{https://orcid.org/0000-0003-1179-7134}{\includegraphics[scale=0.06]{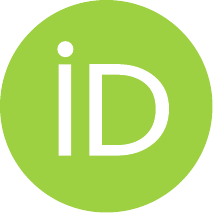}\hspace{1mm}Ruth Urner} \\
    EECS Department\\
	York University\\
    Toronto, Canada\\
	\texttt{uruth@yorku.ca} \\
	%% \AND
	%% Coauthor \\
	%% Affiliation \\
	%% Address \\
	%% \texttt{email} \\
	%% \And
	%% Coauthor \\
	%% Affiliation \\
	%% Address \\
	%% \texttt{email} \\
	%% \And
	%% Coauthor \\
	%% Affiliation \\
	%% Address \\
	%% \texttt{email} \\
}

% Uncomment to remove the date
%\date{}

% Uncomment to override  the `A preprint' in the header
%\renewcommand{\headeright}{Technical Report}

%%% Add PDF metadata to help others organize their library
%%% Once the PDF is generated, you can check the metadata with
%%% $ pdfinfo template.pdf
\hypersetup{
pdftitle={InterpretableCalibration},
pdfsubject={q-bio.NC, q-bio.QM},
pdfauthor={Alireza Torabian, Ruth Urner},
pdfkeywords={Calibration, Axiomatic analysis, Evaluation measures},
}

\begin{document}
\maketitle

\begin{abstract}
	Calibration is a frequently invoked concept when useful label probability estimates are required on top of classification accuracy. A calibrated model is a function whose values correctly reflect underlying label probabilities. Calibration in itself however does not imply classification accuracy, nor human interpretable estimates, nor is it straightforward to verify calibration from finite data. There is a plethora of evaluation metrics (and loss functions) that each assess a specific aspect of a calibration model. In this work, we initiate an axiomatic study of the notion of calibration. We catalogue desirable properties of calibrated models as well as corresponding evaluation metrics and analyze their feasibility and correspondences. We complement this analysis with an empirical evaluation, comparing common calibration methods to employing a simple, interpretable decision tree.
\end{abstract}

% keywords can be removed
\keywords{Calibration \and
Axiomatic analysis \and
Evaluation measures}

\section{Introduction}\label{sec:intro}
In many applications it is important that a classification model not only has high accuracy but that a user is also provided with a reliable estimate of confidence in the predicted label. Calibration is a concept that is often invoked to provide such confidence estimates to a  user. As such, calibration is a notion that is \emph{inherently aimed at human interpretation}. In binary classification, a perfectly calibrated model $f$ provides the guarantee that if it predicts $f(x) = p$ on some instance $x$, then \emph{among the set of all instances on which $f$ assigns this value $p$} the probability of label $1$ is indeed $p$ (and the probability of label $0$ thus $1-p$).

While calibration is generally considered useful, we would argue that in many cases, even if achieved, it is doomed to fail at its original goal of providing insight to a human user: for most suitably complex classification models, a human user that observes $f(x) = p$ has no notion of the set of all instances on which $f$ also outputs $p$. The promise given by calibration is thus meaningless.

In this work we investigate which (additional) properties would actually yield human understandable calibration scores. We take an axiomatic (or property based) approach and start by outlining formal desiderata for a calibrated model. The first and obvious property is high classification accuracy, which is not implied by calibration. Second, a property that is often implicitly aimed at in the context of calibration is that the predictor actually approximates the regression function $\eta(x) = \Pr[y=1|x]$ of the data-generating process  \cite{SunSH23}. This is however also not implied by calibration. We then propose three properties that relate to interpretability: 1) that the pre-images  (or level sets) $f^{-1}(r)$ of the model are identifiable to a user, 2) that the range of values that the model outputs is not too large, and 3) that the model is monotonic with respect to the underlying regression function. Section \ref{sec:properties} starts with a detailed discussion and motivation for these suggested desiderata. In that section we also formally analyze the interplay between these (initially strictly phrased) properties.

Since a learned model can usually not be expected to satisfy properties such as optimal accuracy or calibration perfectly, in Section \ref{sec:properties_relaxation} we then move to outlining relaxations of our desiderata in form of measures of distance from the properties. The discussion and analysis in that section focuses on measures at the population level of a data-generating process. We analyze how simple operations on a predictor, which may improve its calibration, affect these measures.

In the last Section \ref{sec:experiments}, we  deal with empirical, finite data based versions of these measures. We again start by outlining and discussing these empirical measures, most of which are from the literature on calibration. Our experiments on a variety of real world datasets then evaluate them on a simple, inherently interpretable model for calibration, namely a decision tree, and compare its performance to three other, not necessarily interpretable standard calibration methods. The goal of this section is to take a model for which we can control our interpretability criteria (the pre-images $f^{-1}(r)$ are here the leafs of the tree and thus interpretable, and the number of these can be set by a user), and assess how this interpretable model compares to other methods in terms of other performance measures.

In summary, we systematically outline and analyze the interplay between desirable properties, evaluation measures and sometimes implicit objectives on three levels: on an idealized level as axioms (or deterministic properties), on the distributional level as probabilistic metrics, and on the empirical level as measures to be estimated from finite data. 
While the first level is aimed at capturing the aleatoric uncertainty in the data generation, and the second defines measures of how well a predictor reflects this uncertainty, the last level integrates the epistemic uncertainty, namely how to estimate these qualities from samples.
Our work sheds light on the role of interpretability in the context of calibration, which we find essential for calibration to be meaningful and thus useful to users.

\subsubsection*{Overview on related work}
Calibration is a well established notion with studies on this concept dating back decades \cite{Dawid1982TheWB,FosterVohra98,KakadeF08}. Summarizing this rich body of literature is beyond the scope of this manuscript, but recent surveys provide an overview on the concept of calibration, common methods aimed at achieving it and popular evaluation metrics \cite{Filho2023,wang2024calibration}. 
With the advent of increasingly powerful yet opaque machine learning models, the concept of calibration has enjoyed renewed interest and research activity in recent years \cite{blasiok2022unifying,MBCT,BlasiokGHN23,FamigliniCC23}. 

Methods to obtain calibration broadly fall into two categories: post-processing an existing model or directly training in a way that promotes calibration in the learned model. Platt Scaling (PS) \cite{plattscaling} and Isotonic Regression (IS) \cite{isotonic} (which we include in our experiments) are two well established methods in the former category. Another class of commonly used post-processing methods, for which formal guarantees also exist, is re-calibration based on binning \cite{naeini2015bayesian,scaling_binning,metric_validity_auc,SunSH23}. To directly promote calibration, training by optimizing a \emph{proper loss} is often recommended. Proper losses are minimized by the data-generating distribution's regression function. Very recent work has analyzed when this actually leads to calibrated models \cite{BlasiokGHN23}.

A major challenge with understanding how to obtain calibrated models is the lack of clear, commonly accepted criteria for ``how uncalibrated'' a model is. There are a variety of studies that aim to address this inherent ambiguity both from a practical point of view, by systematically developing and comparing evaluation methods \cite{PosoccoB21,Filho2023,FamigliniCC23} and from a theoretical perspective by formally establishing failure modes and success guarantees \cite{blasiok2022unifying}. 

While some recent studies point out contributions of calibration \emph{for model interpretability} \cite{ScafartoPB22}, we are not aware of a systematic analysis of the interpretability \emph{of calibration} itself, which is the focus of this work.

\section{Formal Setup}\label{sec:setup}
\paragraph{Binary Classification} We consider the standard setup of statistical learning: We let $\X$ denote a feature space and $\Y = \{0,1\}$ the label space. The data generation is modelled as a distribution $D$ over $\X\times \Y$. We use $D_\X$ to denote the marginal of $D$ over $\X$. We use $\supp{\cdot}$  to denote the support of  a distribution. With slight abuse of notation, for a distribution $D$ over $\X\times\Y$,
we will often write $\supp{D}$ to also refer to the support $\supp{D_X}$ of the marginal $D_\X$. 
Further, we let $\eta_D:\X\to[0,1]$ denote the \emph{regression function} of the distribution $D$:
\[
\eta_D(x) = \Pr_{(x', y)\sim D}[y=1 | x' =x]
\]
A \emph{predictor} is a function $f:\X\to\reals$ that assigns every instance a real valued score. 
Given a data generating distribution $D$, we let $\range{D}(f)$ denote the 
\emph{effective range} of the predictor, namely the smallest set $R$ such that with probability $1$ over drawing $x\sim D_X$, we have $f(x)\in R$. For discrete distributions, we can alternatively define  $\range{D}(f) \coloneqq \{f(x) | x \in \supp{D_X}\}$. 
For simplicity, we will usually use  statements such as ``for all $x\in\supp{D_X}$'' instead of ``with probability $1$ over $D_X$'', and ``there exist $x\in\supp{D_X}$'' instead of ``with probability greater than $0$ over $D_X$''. These concepts are equivalent for discrete distributions (and under some continuity assumptions on functions in the non-discrete case). The above substitutions can be made for more general cases.

We define the \emph{cells generated by predictor $f$} as the subsets of $X$ on which $f$ is constant, i.e., the pre-images under $f$ of the values in $\range{D}(f)$; a predictor $f$ thus partitions  $X$ into cells.

A \emph{classifier} is a function $h: \X\to \Y$ that assigns every feature vector a class label. For binary classification, it is common to threshold some predictor for this. Given $f: \X\to\reals$, we define the classifier induced by $f$ with threshold $\theta$ as
\[
 f_\theta(x) = \indct{f(x) \geq \theta}
\]
where $\indct{\cdot}$ denotes the indicator function.
We use $\F = \reals^\X$ to denote the set of all (measurable) predictors.
Predictors $f$ are evaluated by means of a \emph{loss function} $\lo:\F\times\X\times\Y\to\reals$, where $\lo(f,x,y)$ indicates the quality of prediction $f(x)$  given observed label $y$. The goal is to achieve low \emph{expected loss} 
\[
\Lo{D}(f) = \Ex_{(x,y)\sim D}[\lo(f,x,y)].
\]
The \emph{binary loss} (or \emph{$0/1$-loss}) is the standard evaluation metric for classifiers
$\blo(h,x,y) = \indct{h(x)\neq y}$.
The \emph{Bayes classifier} is a classifier with minimal expected binary loss, denoted by $\bopt{D}$, the \emph{Bayes loss}  of $D$. 

\paragraph{Calibration}
In many applications, it is desirable to not only achieve low classification loss (that is high accuracy), but to have a predictor that accurately reflects \emph{probabilities} of the label events. The notion of \emph{calibration} defines such a property; namely, that the predicted value $f(x)$ accurately reflects the probability of seeing label $1$ among all instances that are given value $f(x)$ \cite{Dawid1982TheWB,FosterVohra98,KakadeF08,Filho2023}.
\begin{definition}\label{def:calibration}
A predictor $f:\X\to[0,1]$ is \emph{calibrated} if for all $x\in\X$ we have:
\[
f(x) = \Ex_{(x',y')\sim D}[y' ~\mid~ f(x') = f(x)]
\]
\end{definition}
Predictors are rarely  expected to be perfectly calibrated as in the above definition. There are a variety of notions to measure ``degrees of miscalibration'' both with respect to the underlying distribution and empirically as observed on a dataset. We outline some of these in later parts of this work (see Sections \ref{sec:properties_relaxation} and \ref{sec:experiments}). We note here that, due to the conditioning on the level sets of the predictor in the definition of calibration, there is no straightforward way of measuring miscalibration, in particular not as an expectation over a pointwise defined loss function which would depend only on a predictor $f$ and an observation $(x,y)$.

\section{Desiderata for Calibration}\label{sec:properties}
We now list some formal requirements for predictors that are aimed to be calibrated. 
The goal here is to make often implicit motivations explicit and formal.
The first, obvious, requirement (first item in the list) is calibration itself as defined in Definition \ref{def:calibration}.
However, such a predictor should often have additional qualities that are not subsumed by the notion of calibration.
Of course it is still desirable (if not imperative) that the predictor allows to be thresholded into a classifier with high accuracy (second item in the list). Moreover, the hope behind calibration is often that the predictor $f$ will actually be a good representation of data-generating distribution's regression function $\eta_D$ (third item). Neither of these latter two requirements is implied by calibration.
We will formally elaborate on this in Section \ref{ss:deterministic_analysis} below.

Furthermore, we would argue that the concept of calibration is inherently aimed at \emph{aiding human interpretation}. The intent of providing a probability estimate rather than simply outputting the most likely label is to provide a human user with a better way to gauge the certainty with which the user should expect to see a certain label. However for this estimate to be meaningful to a human user, \emph{the user needs to have a notion of the pool of instances that also received this particular estimate}. That is, if a calibrated model outputs $f(x) = 0.7$, the human user needs a notion of the set $f^{-1}(0.7) = \{x \in X | f(x) = 0.7\}$, among which this user is now promised that $70\%$ of instances will have label $1$. Note that in this case calibration does not imply that $70\%$ of instances with this exact (or similar) feature vector $x$ will have label $1$. Thus, the mere statement $f(x) = 0.7$ (even from a calibrated predictor) does not provide insight into the data generating process. The fourth item in the list below captures these considerations: the cells induced by a calibrated predictor should be interpretable to a human user and there shouldn't be too many of such cells.

The fifth and last item in the list of requirements below also aims at human interpretability. 
If a user observes the predicted values on two input instances $f(x)$ and $f(x')$, say $f(x) = 0.57$ and $f(x') = 0.89$, the most meaningful insight might be that the first instance $x$ is less likely to have  label $1$ than the second instance $x'$, (based on observing that $f(x) < f(x')$). The exact values ($0.57$ and $0.89$) may not be as easy to make sense of.
However, this type of pairwise comparison is valid only if the predictor is point-wise monotonic with respect to the data-generating distribution's regression function.

\paragraph{Formal requirements} 
We let $f:\X\to\reals$ denote a predictor and $D$ be a distribution over $\X\times\Y$. The list below summarizes our desiderata for $f$:
\begin{enumerate}
\item \textbf{Calibration.} $f$ is perfectly calibrated (see Definition \ref{def:calibration}):
\[\forall r \in \range{D}(f) ~:~ \mathop{\mathbb{E}_{(x, y)\sim D}} [y~|~f(x)=r] = r.\]
\item \textbf{Classification accuracy.} Thresholding on $f$ yields an optimal classifier:
\[
\exists \theta\in\reals ~:~ \bLo{D}(f_\theta) = \bopt{D}.
\] 
 \item \textbf{Approximating the regression function}: 
$f$ perfectly approximates $\eta_D$:
$$\forall x\in\supp{D}~:~ f(x) = \eta_D(x).$$
\item \textbf{Interpretability}: The cells induced by $f$, that is the pre-images $f^{-1}(r)\coloneqq\{x \in \X ~\mid~ f(x)=r\}$ for $r\in\range{D}(f)$, are meaningful to a human user. 
Moreover, there are relatively few induced cells. That is $|\{f^{-1}(r) ~|~r \in \range{D}(f)\}| = |\range{D}(f)|$ is small.

\item \textbf{Monotonicity}: Predictor $f$ generates  probability estimates that are monotonic with respect to the regression function $\eta_D$, that is:
\begin{align*}
\forall x_i, x_j\in\supp{D} ~:~ (\eta_D(x_i) - \eta_D(x_j))\cdot (f(x_i) - f(x_j)) \geq 0
\end{align*}
If equality holds only when $\eta_D(x_i) = \eta_D(x_j)$, we call $f$ \emph{strictly monotonic} with respect to $\eta_{D}$.
\end{enumerate}
We will start by analyzing these strictly phrased properties. In Section \ref{sec:properties_relaxation} below, we will introduce and investigate probabilistic relaxations of these properties.

\subsection{Interplay of Strict Properties}\label{ss:deterministic_analysis}
We start our analysis by investigating relationships, implications and compatibilities between the above desiderata. 
At first glance, it might appear as if calibration is a stronger requirement than the existence of a threshold for optimally accurate classification. However, it is not difficult to see, and generally known \cite{Filho2023}, that calibration is actually a property that is independent of accuracy.
A predictor can be perfectly calibrated while effectively useless for classification. And conversely a predictor can be highly accurate while not being calibrated at all.

\begin{observation}
Calibration does not imply optimal classification accuracy and optimal classification accuracy does not imply calibration.
\end{observation}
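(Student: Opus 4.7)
The plan is to prove the observation by exhibiting two separating examples on the same simple discrete distribution, one for each direction. I would take $\X = \{x_1, x_2\}$ with uniform marginal and regression function $\eta_D(x_1) = 0.1$, $\eta_D(x_2) = 0.9$. Under this distribution the Bayes classifier outputs $0$ on $x_1$ and $1$ on $x_2$, with $\bopt{D} = 0.1$, so neither trivial all-zero nor all-one classifier is optimal.

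For the first direction (calibration without optimal accuracy), I would use the constant predictor $f \equiv \mathbb{E}_{(x,y)\sim D}[y] = 0.5$. Because $\range{D}(f)$ is a singleton, the calibration condition in Definition~\ref{def:calibration} reduces to the identity $\mathbb{E}[y \mid f(x) = 0.5] = \mathbb{E}[y] = 0.5$, which holds by construction, so $f$ is perfectly calibrated. However, every thresholded classifier $f_\theta$ is constant on $\X$ and therefore has binary loss $0.5 > \bopt{D}$, so no threshold recovers optimal accuracy.

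For the second direction (optimal accuracy without calibration), I would use the ``hardened'' predictor $f(x_1) = 0$, $f(x_2) = 1$. Thresholding at $\theta = 0.5$ reproduces the Bayes classifier exactly, so $\bLo{D}(f_{0.5}) = \bopt{D}$. On the other hand, $\mathbb{E}[y \mid f(x) = 0] = \eta_D(x_1) = 0.1 \neq 0$ and $\mathbb{E}[y \mid f(x) = 1] = \eta_D(x_2) = 0.9 \neq 1$, so calibration fails at both level sets.

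There is no substantive obstacle here: the observation records that calibration and accuracy are genuinely orthogonal properties, and this is witnessed by the two most natural ``extreme'' predictors on any distribution with a non-trivial Bayes classifier whose $\eta_D$ avoids $\{0,1\}$ — an uninformative but marginally truthful constant predictor on one side, and a perfectly discriminative but overconfident hard predictor on the other. The only care required is to choose such a distribution, so that both failure modes are simultaneously visible.
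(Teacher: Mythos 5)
Your proof is correct and follows essentially the same strategy as the paper's: a two-point distribution on which the constant predictor $f \equiv 0.5$ is calibrated but admits no good threshold, paired with a perfectly discriminating but miscalibrated predictor. The only cosmetic differences are that the paper uses a deterministic regression function ($\eta_D \in \{0,1\}$) and a predictor outputting $0.5 \pm \epsilon$ rather than your hardened $\{0,1\}$-valued one; both choices witness the same separation.
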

\begin{proof}
Consider a one-dimensional feature space, $\X = \reals$, and a distribution $D$ that has marginal mass distributed uniformly on two points,  $D_X(-1) = D_X(+1) = 0.5$,  with a deterministic regression function $\eta_D(x) = \indct{x\geq 0}$. Now the constant predictor $f(x) = 0.5$ is perfectly calibrated, but any threshold $\theta\in\reals$ will result in worst possible classification loss $\bLo{D}(f_\theta)= 0.5$. On the other hand, a predictor $g$ with $g(x) = 0.5-\epsilon$ for $x <0$ and  $g(x) = 0.5+\epsilon$ for $x \geq 0$ for any $\epsilon >0$ admits a threshold (namely $\theta  = 0.5$) such that the resulting classifier $g_\theta$ has perfect classification loss $\bLo{D}(g_\theta) = 0$ while not being calibrated.
\qed\end{proof}

Of course, the regression function $\eta_D$ is always a predictor (albeit usually an unknown one) that is both perfectly calibrated and optimally accurate (by definition, with threshold $\theta = 0.5$). However, we now show that in most cases (except for distributions where the regression function is overly simple) it is not the only predictor that enjoys these two qualities. This then means that these two properties together (calibration and possibility of optimal classification accuracy) do not imply that the regression function $\eta_D$ is well approximated.

\begin{theorem}
There exist predictors $f$ different from $\eta_D$ (with positive probability)
satisfying both perfect calibration and optimal classification accuracy if and only if one of the sets $(\range{D}(\eta_D) \cap [0, 0.5))$ and $(\range{D}(\eta_D) \cap [0.5,1])$ has size at least $2$ (that is, if and only if a Bayes optimal predictor outputs both labels and the effective range of $\eta_D$ has size at least $3$; or a Bayes optimal predictor outputs only one label and the effective range of $\eta_D$ has size at least $2$).
\end{theorem}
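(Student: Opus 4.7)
My approach for both directions is to think of a calibrated $f$ as arising from $\eta_D$ by merging level sets: calibration forces $f(x) = \E[\eta_D(x')\mid f(x')=f(x)]$, so a calibrated $f$ is entirely determined by the partition it induces together with $\eta_D$, and $f\neq \eta_D$ with positive probability if and only if some cell of $f$ merges at least two distinct $\eta_D$-values into a single level set.

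\textbf{Sufficiency ($\Leftarrow$).} Suppose without loss of generality that $\range{D}(\eta_D)\cap[0.5,1]$ contains two distinct values $r_1\neq r_2$ realised on positive-measure cells $A_i=\eta_D^{-1}(r_i)$ (the case of two values in $[0,0.5)$ is symmetric, with label $0$ in place of label $1$). I would define $f$ to agree with $\eta_D$ outside $A_1\cup A_2$ and to take the value $\bar r = (D_X(A_1)\,r_1 + D_X(A_2)\,r_2)/(D_X(A_1)+D_X(A_2))\in[0.5,1]$ on $A_1\cup A_2$. Thresholding $f$ at $0.5$ then assigns label $1$ on $A_1\cup A_2$ (Bayes optimal because $\eta_D\geq 0.5$ there) and reproduces the Bayes classifier on every unmerged cell; calibration on the merged cell holds by construction of $\bar r$ as its conditional mean, and on the unmerged cells it is inherited from $\eta_D$. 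Since $r_1\neq r_2$, at least one of $r_1, r_2$ differs from $\bar r$, so $f\neq\eta_D$ on a positive-probability set.

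\textbf{Necessity ($\Rightarrow$).} For the converse, assume a calibrated $f$ with $f\neq\eta_D$ on a positive-probability set and $\bLo{D}(f_\theta)=\bopt{D}$ for some $\theta$. The calibration identity implies that if $\eta_D$ were constant on every cell of $f$, then $f$ would equal $\eta_D$ almost surely; hence some cell $C=f^{-1}(r)$ contains two distinct $\eta_D$-values $s_1\neq s_2$. The classifier $f_\theta$ is constant on $C$, and matching the Bayes-optimal label at every point of $C$ forces $s_1$ and $s_2$ to share a common Bayes-optimal label. If that label is $1$, both lie in $\range{D}(\eta_D)\cap[0.5,1]$; if it is $0$, both lie in $\range{D}(\eta_D)\cap[0,0.5)$. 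Either way one of the two sets has cardinality at least two.

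\textbf{Main obstacle.} The delicate step is the boundary value $\eta_D(x)=0.5$, where both labels attain the Bayes loss. The asymmetric intervals $[0,0.5)$ and $[0.5,1]$ in the statement implicitly fix the tie-breaking convention $h^*(x)=\indct{\eta_D(x)\geq 0.5}$, so that in the necessity direction a cell assigned label $0$ by $f_\theta$ may only host points with $\eta_D<0.5$. The hard part is ensuring this convention is respected on both sides, in particular when verifying that the constructed $f_\theta$ matches the chosen Bayes classifier rather than merely attaining Bayes loss, and when ruling out ``free'' merging of a level set $\eta_D=0.5$ with a level set $\eta_D<0.5$ in the converse. Beyond this bookkeeping, the argument reduces to manipulating the calibration identity together with the pointwise characterisation of Bayes optimality for binary classification.
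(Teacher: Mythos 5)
Your proposal matches the paper's proof essentially step for step: the sufficiency construction (merging two same-side level sets of $\eta_D$ and assigning the merged cell its conditional label average) is identical to the paper's, and your necessity argument via a cell of $f$ containing two distinct $\eta_D$-values is the same idea the paper phrases in terms of the two threshold halves $f^{-1}([0,\theta))$ and $\eta_D^{-1}([0,0.5))$. The tie-breaking subtlety at $\eta_D(x)=0.5$ that you flag as the main obstacle is likewise left unresolved in the paper's own proof, which simply asserts that those two halves coincide.
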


\begin{proof}
Let's assume that at least one of the sets $\range{D}(\eta_D) \cap [0, 0.5)$ and $\range{D}(\eta_D) \cap [0.5,1]$ has size at least $2$.
Without loss of generality we can assume that 
there exist $\eta_1, \eta_2 \in\range{D}(\eta_D)$, with $\eta_1, \eta_2 < 0.5$ and $\eta_1\neq \eta_2$.
Let's denote regions where the regression function takes on these values by $\X_1 = \eta_D^{-1}(\eta_1) \subseteq X$, and $\X_2 = \eta_D^{-1}(\eta_2)\subseteq X$. By definition of the effective range, these sets have positive probability under $D_X$. Now consider the predictor 
{\small
\[
f(x) = 
\begin{cases}
\E_{(x',y) \sim D}[y \mid x'\in (\X_1\cup X_2)] & \text{ if } x \in (\X_1\cup X_2)\\
\eta_D(x) & \text{ if } x \notin (\X_1\cup\X_2).
\end{cases}
\]
}
By construction, this predictor, thresholded at $0.5$ has the same classification loss as $\eta_D$ (namely $\bopt{D}$) while being different from $\eta_D$ with positive probability.

Conversely, assume that there exists a predictor $f$ that is perfectly calibrated and achieves Bayes loss with some threshold $\theta\in[0,1]$, but is not identical to $\eta_D$ (meaning the functions differ with positive probability with respect to $D_\X$). Since $\bLo{D}(f_\theta) = \bopt{D}$, the sets $f^{-1}([0, \theta))\cap \supp{D}$ and $\eta_D^{-1}([0, 0.5))\cap \supp{D}$ must be identical and the sets $f^{-1}([\theta,1])\cap \supp{D}$  and $\eta_D^{-1}([0.5,1])\cap \supp{D}$  must be identical. Now if $\eta_D$ was constant on both of these sets, then the only way for $f$ to be calibrated would be to also take on that same constant values (and thus $f$ would be identical to $\eta_D$). Thus, if $f$ differs from $\eta_D$ in the support of $D_\X$ while being calibrated, then $\eta_D$ is not constant on at least one of $\eta_D^{-1}([0.5,1])\cap \supp{D}$ or $\eta_D^{-1}([0, 0.5))\cap \supp{D}$, which implies that at least one of $\range{D}(\eta_D) \cap [0, 0.5)$ and $\range{D}(\eta_D) \cap [0.5,1]$ has size at least $2$.
\qed\end{proof}

\begin{corollary}
Perfect calibration and optimal classification accuracy together do not imply perfect approximation of $\eta_D$.
\end{corollary}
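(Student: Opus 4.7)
The plan is to derive this immediately from the preceding theorem by exhibiting a single distribution $D$ that falls under its hypothesis. Since the theorem gives a full characterization of when non-trivial calibrated-and-Bayes-optimal predictors exist, all that is needed is one concrete $D$ for which $|\range{D}(\eta_D) \cap [0,0.5)| \geq 2$ (or symmetrically $|\range{D}(\eta_D) \cap [0.5,1]| \geq 2$), so that the theorem guarantees a calibrated, optimally accurate predictor $f$ that disagrees with $\eta_D$ on a set of positive $D_\X$-mass.

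For concreteness, I would take a discrete $D$ on $\X=\{x_1,x_2,x_3\}$ with uniform marginal $D_\X$ and regression function $\eta_D(x_1)=0.1$, $\eta_D(x_2)=0.2$, $\eta_D(x_3)=0.8$. Then $\range{D}(\eta_D)\cap[0,0.5)=\{0.1,0.2\}$ has size $2$, so the hypothesis of the theorem is satisfied. Applying the construction in the forward direction of the theorem with $\X_1=\{x_1\}$ and $\X_2=\{x_2\}$ yields the predictor $f(x_1)=f(x_2)=0.15$ and $f(x_3)=0.8$.

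A short verification then shows all three required facts: (i) $f$ is perfectly calibrated, since $\E_{(x',y)\sim D}[y\mid f(x')=0.15]=(0.1+0.2)/2=0.15$ and $\E_{(x',y)\sim D}[y\mid f(x')=0.8]=0.8$; (ii) thresholding $f$ at $\theta=0.5$ produces the Bayes classifier, so $\bLo{D}(f_\theta)=\bopt{D}$; and (iii) $f(x_1)=0.15\neq 0.1=\eta_D(x_1)$, so $f$ differs from $\eta_D$ on a set of positive probability. Hence calibration together with optimal classification accuracy does not imply $f=\eta_D$ on $\supp{D_\X}$.

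There is essentially no obstacle here: the work has been done by the theorem, and the only care needed is to pick a distribution whose regression function has enough distinct values on one side of $0.5$ so that the characterization applies. I would explicitly name the example rather than argue abstractly, since a single counterexample is all that a ``does not imply'' statement requires.
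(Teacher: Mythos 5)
Your proposal is correct and takes essentially the same route as the paper: the corollary is stated there as an immediate consequence of the preceding theorem, whose forward direction already constructs a calibrated, Bayes-optimal predictor differing from $\eta_D$ whenever the regression function takes at least two values on one side of $0.5$. Your concrete three-point example is a valid instantiation of that construction and all your verifications check out.
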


Rather than calibration, strict monotonicity is a property that is closely related to both optimal classification accuracy and approximation of $\eta_D$.

\begin{observation}
Strict monotonicity implies optimal classification accuracy.
\end{observation}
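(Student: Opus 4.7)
The plan is to construct an explicit threshold $\theta$ such that the induced classifier $f_\theta$ agrees with a Bayes-optimal classifier on $\supp{D}$, which immediately gives $\bLo{D}(f_\theta) = \bopt{D}$. Write $A \coloneqq \{x\in\supp{D} : \eta_D(x) \geq 0.5\}$ and $B \coloneqq \{x\in\supp{D} : \eta_D(x) < 0.5\}$; a Bayes-optimal classifier outputs $1$ on $A$ and $0$ on $B$. I would first dispatch the trivial edge cases $A = \emptyset$ or $B = \emptyset$ by choosing $\theta$ beyond the range of $f$, so that $f_\theta$ is a constant function matching the unique Bayes label.

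In the main case, I use strict monotonicity to separate $f(A)$ from $f(B)$. For any $x\in A$ and $y\in B$ we have $\eta_D(x) \geq 0.5 > \eta_D(y)$, so the two regression values differ; the strict part of the monotonicity assumption then forces $(\eta_D(x) - \eta_D(y))(f(x) - f(y)) > 0$, which combined with $\eta_D(x) > \eta_D(y)$ yields $f(x) > f(y)$. Thus every $f$-value on $A$ strictly exceeds every $f$-value on $B$. Taking $\theta \coloneqq \inf_{x\in A} f(x)$, attained at some $x_0\in A$ under the discrete setup of Section \ref{sec:setup}, gives $f_\theta(x) = 1$ for all $x\in A$ (since $f(x)\geq \theta$) and $f_\theta(y) = 0$ for all $y\in B$ (since $f(y) < f(x_0) = \theta$). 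Hence $f_\theta$ coincides with a Bayes-optimal classifier on $\supp{D}$, completing the proof.

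The main (mild) obstacle I anticipate is the case where the infimum $\inf_{x\in A} f(x)$ is not attained, outside the discrete setting. In that case one needs to verify $\sup_{y\in B} f(y) < \inf_{x\in A} f(x)$ so that a separating $\theta$ exists; this is handled under the continuity/discreteness convention introduced in Section \ref{sec:setup}, or alternatively by noting that any boundary points producing ties form a $D_\X$-measure-zero set and thus do not affect $\bLo{D}(f_\theta)$.
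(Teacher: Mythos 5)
Your proof is correct and follows essentially the same route as the paper's: both split $\supp{D}$ at $\eta_D = 0.5$, use strict monotonicity to show every $f$-value on the positive part strictly exceeds every $f$-value on the negative part, and then place a separating threshold (the paper uses the midpoint of $\sup f(X_-)$ and $\inf f(X_+)$ where you use $\inf_{x\in A} f(x)$). Your extra care about whether the infimum is attained is a reasonable refinement but does not change the substance of the argument.
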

\begin{proof}
Consider a predictor $f$ and assume that $f$ satisfies strict monotonicity with respect to $\eta_D$. Using the threshold $0.5$ on the regression function $\eta_D$, we can split the set $\supp{D}$ into two disjoint subsets $X_- \coloneqq \{x \in \supp{D}~:~ \eta_D(x) < 0.5\}$ and $X_+= \{x \in \supp{D}: \eta_D(x) \geq 0.5\}$.
Let $f_{X_-} \coloneqq \{f(x) : x \in X_-\}$ and $f_{X_+} \coloneqq \{f(x) : x \in X_+\}$ be the ranges of values that $f$ takes on $X_-$ and $X_+$ respectively. 
For any $x_i$ from $X_-$ and any $x_j$ from $X_+$, $f(x_i) < f(x_j)$ since $\eta_D(x_i) < \eta_D(x_j)$ and $f$ is strictly monotonic. This shows that any member of $f_{X_-}$ is smaller than any member of $f_{X_+}$. Therefore, $\inf(f_{X_+}) \geq \sup(f_{X_-})$. Thus $\theta = (\inf(f_{X_+}) + \sup(f_{X_-})) / 2$ is a threshold on $f$ that achieves Bayes loss.
\qed\end{proof}

\begin{theorem}
A predictor $f$ perfectly approximates the regression function $\eta_D$ if and only if it is perfectly calibrated and strictly monotonic with respect to $\eta_D$.
\end{theorem}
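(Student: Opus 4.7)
I would handle the two directions separately, with the forward direction being essentially a direct verification and the reverse direction being the substantive content.

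For the forward direction, suppose $f(x) = \eta_D(x)$ for all $x \in \supp{D}$. Calibration is immediate: for any $r \in \range{D}(f)$, the conditional set $\{x : f(x) = r\}$ equals $\{x : \eta_D(x) = r\}$, on which $\E[y \mid x] = \eta_D(x) = r$, so $\E[y \mid f(x) = r] = r$. Strict monotonicity is also immediate, since the product becomes $(\eta_D(x_i) - \eta_D(x_j))^2 \geq 0$, with equality only when $\eta_D(x_i) = \eta_D(x_j)$.

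For the reverse direction, the key observation is that strict monotonicity forces $\eta_D$ to be constant on each level set of $f$. Concretely, I would first spell out the contrapositive: if $x_i, x_j \in \supp{D}$ with $f(x_i) = f(x_j)$, then the product $(\eta_D(x_i)-\eta_D(x_j))(f(x_i)-f(x_j))$ equals zero, and strict monotonicity then forces $\eta_D(x_i) = \eta_D(x_j)$. Hence for each $r \in \range{D}(f)$ there is a well-defined value $\eta_r$ such that $\eta_D(x) = \eta_r$ for every $x \in f^{-1}(r) \cap \supp{D}$. Next I invoke calibration to pin down $\eta_r$: using the tower property on the event $\{f(x) = r\}$,
\[
r = \E_{(x,y) \sim D}[y \mid f(x) = r] = \E_{x \sim D_\X}[\eta_D(x) \mid f(x) = r] = \eta_r,
\]
where the last equality uses that $\eta_D$ is (almost surely) constant on the level set. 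Therefore, for every $x \in \supp{D}$, writing $r = f(x)$, we get $f(x) = r = \eta_r = \eta_D(x)$, which is the desired conclusion.

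The main obstacle, insofar as there is one, is the bookkeeping in the second step: stating ``$\eta_D$ is constant on $f^{-1}(r)$'' requires care in the non-discrete setting, since strict monotonicity is phrased pointwise on $\supp{D}$ but the calibration conditional expectation is a measure-theoretic object. I would either restrict to the discrete/continuous setting explicitly (consistent with the excerpt's convention of treating ``$\forall x \in \supp{D}$'' and ``with probability $1$'' as interchangeable under mild regularity), or note that the statement ``$\eta_D$ constant on $f^{-1}(r)$'' holds on a set of full $D_\X$-measure within that level set, which is sufficient for the conditional expectation computation to go through. Beyond this measurability caveat, no further machinery is needed.
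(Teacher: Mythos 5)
Your proposal is correct and uses the same key idea as the paper's proof: strict monotonicity forces $\eta_D$ to be constant on each level set of $f$, and calibration then pins that constant to the value $f$ assigns there. The paper phrases the reverse direction as a contrapositive (strictly monotonic and not approximating $\eta_D$ implies not calibrated) rather than your direct argument, but the computation is identical.
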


\begin{proof}
If $f$ perfectly approximates $\eta_D$ (that is, the are identical with probability $1$ over $D_X$), then $f$ is obviously strictly monotonic and perfectly calibrated.

Now we will argue that a predictor that is strictly monotonic with respect to $\eta_D$ and calibrated, also perfectly approximates $\eta_D$. Equivalently, we show that if $f$ is strictly monotonic, but does not perfectly approximate $\eta_D$, then $f$ is not calibrated.
   So let's assume $f$ is strictly monotonic but not perfectly approximating $\eta_D$. Then there exists an $x' \in \supp{D}$ with $ f(x') \neq \eta_D(x')$. Let $S \coloneqq \{x \in \supp{D}: f(x) = f(x')\}$. Since $f$ is strictly monotonic, we have $ \eta_D(x) = \eta_D(x')$ for all $x\in S$. Therefore, 
    \[
    \E_{(x, y)\sim D} [y~|~f(x)=f(x')] =  \E_{(x, y)\sim D} [y~|~x \in S] = \eta_D(x').
    \]
    Since $\eta_D(x') \neq f(x')$, $\Ex_{(x, y)\sim D} [y~|~f(x)=f(x')] \neq f(x')$, thus $f$ is not calibrated. 
\qed\end{proof}

\begin{corollary}
Neither calibration nor strict monotonicity alone implies perfect approximation of $\eta_D$.
\end{corollary}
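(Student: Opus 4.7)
The plan is to exhibit two explicit counterexamples, one for each direction of the corollary, since the claim is a non-implication and hence refutation by example is the natural route. Both examples can be built over the same simple two-point distribution, which keeps the calculations trivial.

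For the calibration direction, I would take $\X = \{a,b\}$ with $D_X(a) = D_X(b) = 1/2$ and a non-constant regression function, say $\eta_D(a) = 0.3$, $\eta_D(b) = 0.7$. Consider the constant predictor $f \equiv 0.5$. Then $\range{D}(f) = \{0.5\}$, so the calibration condition reduces to checking that $\Ex_{(x,y)\sim D}[y] = 0.5$, which holds by the choice of $\eta_D$ and $D_X$. Thus $f$ is perfectly calibrated, but $f(a) = 0.5 \neq 0.3 = \eta_D(a)$, so $f$ does not perfectly approximate $\eta_D$.

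For the strict monotonicity direction, on the same distribution I would define $g(a) = 0.1$ and $g(b) = 0.9$. Since $\eta_D(a) < \eta_D(b)$ and $g(a) < g(b)$, the sign condition $(\eta_D(x_i) - \eta_D(x_j))(g(x_i) - g(x_j)) \geq 0$ holds with equality only when $x_i = x_j$ (and in particular only when $\eta_D(x_i) = \eta_D(x_j)$), so $g$ is strictly monotonic with respect to $\eta_D$. Yet $g(a) = 0.1 \neq 0.3 = \eta_D(a)$, so again perfect approximation fails. One can also view this as a direct consequence of the preceding theorem: since perfect approximation of $\eta_D$ is equivalent to \emph{both} calibration and strict monotonicity holding simultaneously, neither property in isolation can be sufficient, provided each property can fail the other—which is precisely what the two constructions above demonstrate.

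There is no real obstacle here; the only thing to be a little careful about is that the strictly monotonic counterexample $g$ is indeed not calibrated (otherwise it would, by the theorem, coincide with $\eta_D$). This is immediate: conditioning on $g(x) = 0.1$ picks out $x = a$, so $\Ex[y \mid g(x) = 0.1] = \eta_D(a) = 0.3 \neq 0.1$, confirming $g$ is miscalibrated and consistent with the theorem.
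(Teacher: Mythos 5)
Your proposal is correct: both counterexamples are valid, and the constant calibrated predictor on a two-point distribution mirrors the construction the paper itself uses in its first observation, while the strictly monotonic non-calibrated predictor is the natural complement. The paper states this corollary without an explicit proof, treating it as immediate from the preceding theorem, so your explicit constructions are consistent with (and slightly more detailed than) the paper's implicit argument.
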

In Figure \ref{fig:properties_venn} below, we illustrate the relationship between the properties.
% \begin{figure}[h]
%         \begin{minipage}[c]{0.6\textwidth}
%         \includegraphics[width=.8\textwidth]{images/properties_venn.eps}
%         \end{minipage}\hfill
%         \begin{minipage}[c]{0.4\textwidth}
%         \caption[Interplay of calibration properties]{Interplay of calibration desiderata. The intersection of strictly monotonic and perfectly calibrated predictors only contains the regression function $\eta_D$ (and functions that agree with $\eta_D$ with probability $1$ over $D_X$).}
%         \label{fig:properties_venn}
%         \end{minipage}
%     \end{figure}
\begin{figure}[h]
         \begin{center}
        \includegraphics[width=.5\textwidth]{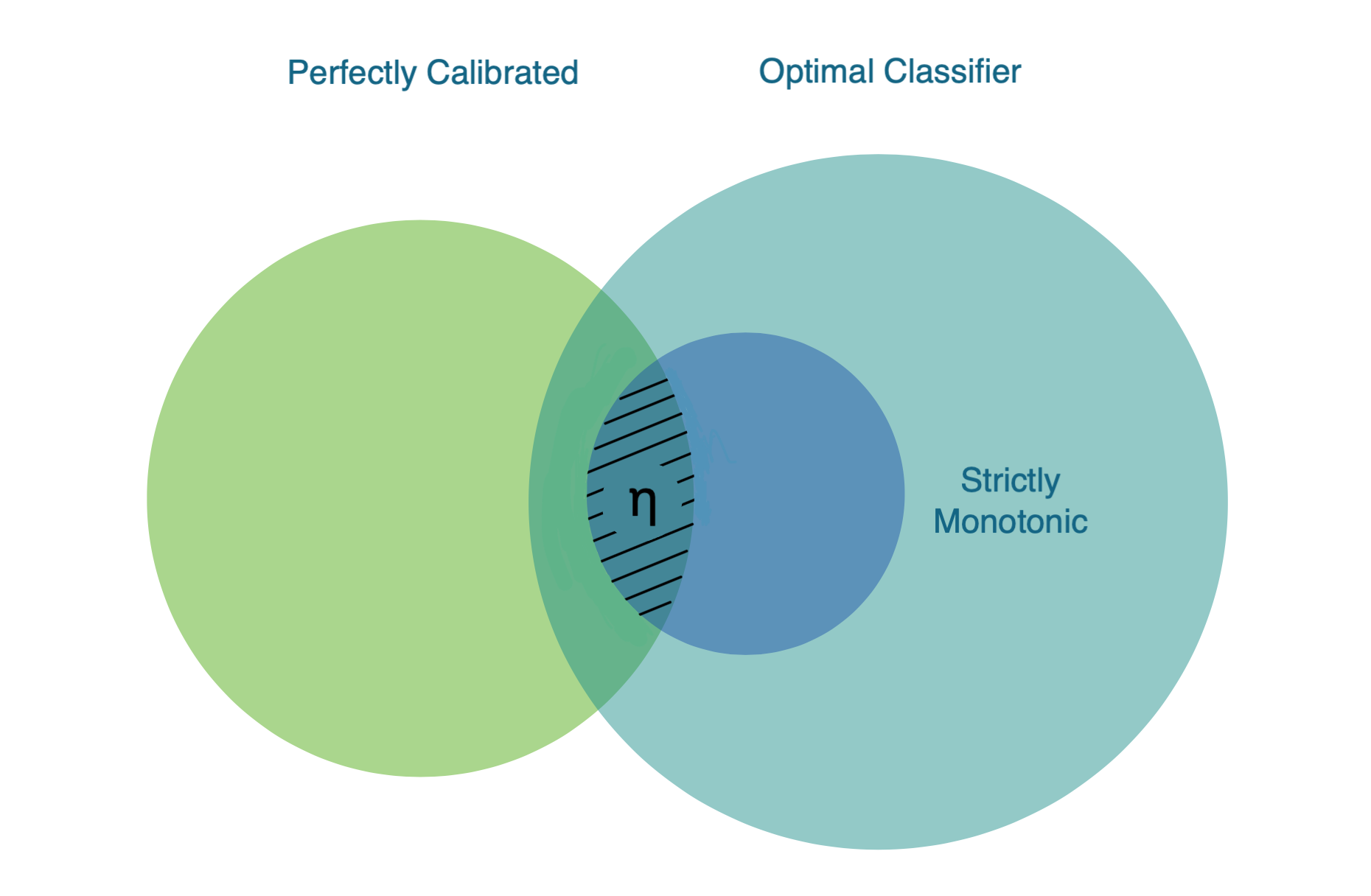}
        \caption[Interplay of calibration properties]{Interplay of calibration desiderata. The intersection of strictly monotonic and perfectly calibrated predictors only contains the regression function $\eta_D$ (and functions that agree with $\eta_D$ with probability $1$ over $D_X$).}
        \label{fig:properties_venn}
        \end{center}
    \end{figure}

\section{Relaxed Desiderata for Calibration} \label{sec:properties_relaxation}
A predictor $f$ that is learned from finite samples is unlikely to fulfill the desiderata precisely. Thus, we now outline relaxed, probabilistic  versions of our five desirable properties, or measures of how much the properties are violated. We here focus on population level measures (rather than possible estimates from finite samples, some of which we discuss in Section \ref{sec:experiments}). It is important to note that the interpretability of the pre-images of a predictor is not a property that is quantifiable by means of a mathematical definition. Therefore, we focus in this section on quantifying the size of the effective range as an aspect of interpretability, and propose a novel, distribution based measure for this.

\begin{enumerate}
\item \textbf{Calibration}: Degree of calibration is measured by the \emph{$L_p$-norm expected calibration error} \cite{metric_validity_auc}:
\[
\ce{p, D}(f)=\big(\E_{(x, y)\sim D}[|f(x) - \E_{(x', y')\sim D}[y'~|~f(x')=f(x)]|^p]~\big)^{1/p}
\]
\item \textbf{Classification accuracy}: 
The quality of classification is measured by the standard expected classification loss for a thresholded predictor $f_\theta$:
\[
\bLo{D}(f_\theta) = \Ex_{(x,y)\sim D}\indct{f_\theta(x) \neq y}
\]

\item \textbf{Approximating the regression function}: To assess whether the predictor $f$ is effectively approximating the regression function, we use the Mean Squared Error (MSE) \cite{Brier1950,SunSH23} :
\[
\mathrm{MSE}_{D}(f) = \Ex_{(x,y)\sim D}[(y - f(x))^2]
\]
Note that MSE is the expectation over a \emph{proper loss}, namely the quadratic loss $\lo^2(f, x,y) = (f(x) - y)^2$, and is thus minimized (over all predictors) by the regression function $\eta_D$ \cite{BlasiokGHN23}. It can thus be viewed as a distance from $\eta_D$.
\item \textbf{Interpretability}: To relax the strict measure of counting cells induced by $f$ (ie.\ $|\range{D}(f)|$ being small), we introduce the \emph{Probabilistic Count (PC)}, as a novel measure that quantifies the size of a predictor's range, taking into account the data-generating distribution. 
    For distribution $D$ over $X \times Y$, we define the \emph{probabilistic count} of predictor $f: X \rightarrow \mathop{\mathbb{R}}$  with respect to $D$ as:
    \[
    \pc{D}(f) = \frac{1}{
    \Pr_{x, x'\sim D_X} [f(x) = f(x')]
    }.
    \]
We show that $\pc{D}(f)$ is always at most $|\range{D}(f)|$. Appendix Section \ref{sec:probabilistic_count} contains this result (Theorem \ref{thm:pc_upperbound}) and further illustrations of this measure.
\item \textbf{Monotonicity}: Kendall's $\tau$ (tau) coefficient is a measure of the monotonicity of finite samples \cite{kendallsTauPuka2011}. For any set of samples $(x_1, y_1), ..., (x_n, y_n)$, any pair of samples $(x_i, y_i)$ and $(x_j, y_j)$, where $i<j$, are discordant if $(x_i - x_j) . (y_i - y_j) < 0$. Kendall's Tau coefficient is defined as:
\[
\tau = 1 - \frac{2 \times \text{number of discordant pairs}}{{n \choose 2}}
\]
Kendall's $\tau$ coefficient is in the range $-1 \leq \tau \leq 1$. $\tau = 1$ represents perfect agreement between the ranking of two variables, and $\tau = -1$ represents perfect disagreement, i.e. one ranking is the reverse of the other. This coefficient can be used to measure monotonicity, but it doesn't consider the ties to measure strict monotonicity.
We now introduce a probabilistic version of this coefficient to measure the monotonicity of two random variables, namely $\eta_D(x)$ and $f(x)$.
We define the \emph{probabilistic Kendall's Tau coefficient} for predictor $f: X \rightarrow \mathop{\mathbb{R}}$ with respect to the 
distribution $D$ over $X \times Y$ as:
\begin{align}
\mathrm{KT}_{D}(f) &= 1 - 2 \times \mathop{\mathbb{P}_{x, x'\sim D_X}}[(\eta_D(x) - \eta_D(x'))\cdot (f(x) - f(x')) < 0 ~\bigr\vert~ x \neq x']\nonumber
\end{align}
\end{enumerate}
In the remainder of this section we explore the effects of two intuitive operations that contribute to model interpretability: cell merging (with score averaging) and readjusting scores by label averaging in a cell. 
Through the analysis of these operations and their effects on the measures from the list above, we aim to clarify when a model can be simplified in a way that may increase its interpretability, without compromising other properties that are desirable for calibrated models. Table \ref{tab:merging_averaging_summary} summarizes the results of this section. The arrows indicate whether a measure can increase, decrease and remain the same through the operation.
\begin{table}[h]
\caption[Implications of cell merging and score averaging on the properties of a predictor]{Implications of cell merging and score averaging on the measures. The arrows and equality signs represent the possible outcomes for each measure. }
\centering
\renewcommand\cellset{\renewcommand\arraystretch{0.6}%
    \setlength\extrarowheight{0pt}}
\renewcommand{\arraystretch}{1.5}
\begin{tabular}{lc @{\hspace{1.2em}} c @{\hspace{1.2em}} c @{\hspace{1.2em}} c @{\hspace{1.2em}} c}
\hline
                                          & $\ce{p, D}$ & $\bLo{D}$ & $\mathrm{MSE}_{D}$ & $\pc{D}$ & $\mathrm{KT}_{D}$\\ \hline
\multicolumn{1}{l|}{\makecell{Cell merging along \\ with averaging scores}}         &
\makecell{$\downarrow=$ \\ (Thm. \ref{thm:join_on_ce})}        & \makecell{$\updownarrows=$ \\ (Obs. \ref{obs:join_on_closs})}       &   \makecell{$\updownarrows=$ \\ (Obs. \ref{obs:join_on_closs})}           &        \makecell{$\downarrow=$
\footnote{The conclusion for $\pc{D}$ holds for any new score for the cell according to Theorem \ref{thm:join_on_pc}.}
\\ (Thm. \ref{thm:join_on_pc})}
& \makecell{$\updownarrows=$ \\ (Obs. \ref{obs:join_on_closs})} \\  \hline  
\multicolumn{1}{l|}{Average label assigning} &  \makecell{$\downarrow=$ \\ (Thm. \ref{thm:average_on_ce_mse})} & \makecell{$\downarrow=$ \\ (Thm. \ref{thm:average_on_ce_mse})} & \makecell{$\downarrow=$ \\ (Thm. \ref{thm:average_on_ce_mse})} & \makecell{$\downarrow=$ \\ (Thm.\ref{thm:average_on_ce_mse})} & \makecell{$\updownarrows=$ \\ (Obs. \ref{obs:average_on_kt})} \\
\hline
\end{tabular}
\label{tab:merging_averaging_summary}
\end{table}

\subsection{Analysis of Cell Merging}\label{ss:cell_merg}
One aspect of interpretability of a predictor is the size of its effective range. When two cells are combined in that a joint value is assigned to all points from the two cells, the size of the effective range decreases. This can thus be viewed as a simple operation which will make the predictor more amenable to human interpretation.

\begin{definition}[Cell merge with score averaging]\label{def:cell_merge}
Let $D$ be a distribution over $X \times Y$, $f: X \rightarrow \mathop{\mathbb{R}}$ be a predictor and let $ r_1, r_2 \in \range{D}(f)$ be two values in the effective range of $f$. We say that predictor $g: X \rightarrow \mathop{\mathbb{R}}$ \emph{is obtained by $(r_1,r_2)$-cell merge of $f$} if $g$ satisfies:
\[
g(x) \coloneqq 
\begin{cases}
r & \text{if $f(x) = r_1$ or $f(x) = r_2$} \\
f(x) & \text{otherwise},
\end{cases}
\]
for some $r\in\reals$. We say that $g$ is obtained by $(r_1,r_2)$-cell merge of $f$ \emph{with score averaging} if $r=\frac{r_1 \cdot \mathop{\mathbb{P}_{x \sim D_X}}[f(x)=r_1] + r_2 \cdot \mathop{\mathbb{P}_{x \sim D_X}}[f(x)=r_2]}
{\mathop{\mathbb{P}_{x \sim D_X}}[f(x)=r_1] + \mathop{\mathbb{P}_{x \sim D_X}}[f(x)=r_2]}$.
\end{definition}
We start by showing that cell merging always decreases the probabilistic count (whether the score is averaged or an arbitrary new score is chosen).
\begin{theorem} \label{thm:join_on_pc}
Let $D$ be a distribution over $X \times Y$, $f,g: X \rightarrow \mathop{\mathbb{R}}$ be predictors and $ r_1, r_2 \in \range{D}(f)$. If $g$ is obtained by $(r_1, r_2)$-cell merge of $f$, then
\[
\pc{D}(g) \leq \pc{D}(f).
\]
\end{theorem}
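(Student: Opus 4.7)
My plan is to rewrite the probabilistic count in terms of a collision probability and then show that cell merging can only increase this collision probability, which in turn can only decrease its reciprocal $\pc{D}$. Concretely, setting $\mathrm{col}(f) \coloneqq \Pr_{x,x'\sim D_X}[f(x)=f(x')]$, the definition gives $\pc{D}(f) = 1/\mathrm{col}(f)$, so the statement reduces to showing $\mathrm{col}(g) \geq \mathrm{col}(f)$.

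First, I would write the collision probability as a sum over the effective range. For the discrete-ranged predictors in play, $\mathrm{col}(f) = \sum_{v\in\range{D}(f)} p_v^2$ where $p_v \coloneqq \Pr_{x\sim D_X}[f(x)=v]$. Let $r_1,r_2$ be the merged cells and let $r$ be the (arbitrary) new value assigned by $g$. I would then split into two cases depending on whether $r$ collides with an existing value in $\range{D}(f)\setminus\{r_1,r_2\}$.

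In the first case, $r\notin\range{D}(f)\setminus\{r_1,r_2\}$, the masses of the other cells are unchanged and the mass at $r$ is $p_{r_1}+p_{r_2}$, so a direct algebraic comparison yields
\[
\mathrm{col}(g) - \mathrm{col}(f) = (p_{r_1}+p_{r_2})^2 - p_{r_1}^2 - p_{r_2}^2 = 2 p_{r_1} p_{r_2} \geq 0.
\]
In the second case, $r$ equals some $v^\star \in \range{D}(f)\setminus\{r_1,r_2\}$, so the new mass at $v^\star$ is $p_{v^\star} + p_{r_1} + p_{r_2}$, and the same style of expansion yields
\[
\mathrm{col}(g) - \mathrm{col}(f) = 2 p_{r_1} p_{r_2} + 2 p_{v^\star}(p_{r_1} + p_{r_2}) \geq 0.
\]
Taking reciprocals gives $\pc{D}(g) \leq \pc{D}(f)$, with equality precisely when $p_{r_1}=0$ or $p_{r_2}=0$ (in case one) or when additionally $p_{v^\star}(p_{r_1}+p_{r_2})=0$ (in case two), i.e.\ when at least one of the merged cells has zero mass under $D_X$.

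I do not anticipate a real obstacle here: once one notices that $\pc{D}$ is the reciprocal of a collision probability, and that merging cells reallocates mass into a single atom (either new or existing), the inequality is just an instance of the elementary fact that $a^2 + b^2 \leq (a+b)^2$ for nonnegative reals. The only mild subtlety is handling the case where the new score $r$ coincides with some existing value in $\range{D}(f)$, since then the effective range of $g$ drops by two rather than one; but the algebraic comparison above is uniform across the two cases and the conclusion only gets stronger in the collision subcase. This also shows that the result holds regardless of how the merged score is chosen, as claimed in the footnote preceding the theorem.
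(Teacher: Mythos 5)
Your proof is correct, but it takes a different (more computational) route than the paper's. The paper argues pointwise: since the cells of $g$ coarsen those of $f$, one has $\indct{g(x)=g(x')} \geq \indct{f(x)=f(x')}$ for every pair $x,x'$ (the two indicators can only differ when both points lie in the merged region, where the $g$-indicator equals $1$), and taking expectations immediately gives $\Pr_{x,x'\sim D_X}[g(x)=g(x')] \geq \Pr_{x,x'\sim D_X}[f(x)=f(x')]$. You instead expand the collision probability as $\sum_v p_v^2$ over the (countable) effective range and compute the increment explicitly, obtaining $2p_{r_1}p_{r_2}$, or $2p_{r_1}p_{r_2}+2p_{v^\star}(p_{r_1}+p_{r_2})$ when the new score collides with an existing value. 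What your version buys is an exact quantitative expression for the decrease in $\pc{D}$ and an equality characterization (note, though, that since $r_1,r_2\in\range{D}(f)$ both cells have positive mass by definition of the effective range, so for $r_1\neq r_2$ the inequality is in fact always strict); what the paper's version buys is that it sidesteps the sum-of-squares expansion entirely, which the paper only develops later (in the appendix, for Theorem~\ref{thm:pc_upperbound}), and it does not need to case-split on whether $r$ already lies in $\range{D}(f)$. Both arguments correctly cover an arbitrary choice of the merged score $r$, as the theorem requires.
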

\begin{proof} To prove $\pc{D}(g) \leq \pc{D}(f)$, it suffices to show that
\begin{align}
    \frac{1}{\pc{D}(g)} - \frac{1}{\pc{D}(f)} &= \mathop{\mathbb{P}_{x, x'\sim D_X}} [g(x) = g(x')] - \mathop{\mathbb{P}_{x, x'\sim D_X}} [f(x) = f(x')] \nonumber \\
    &= \mathop{\mathbb{E}_{x, x'\sim D_X}}[\indct{g(x) = g(x')} - \indct{f(x) = f(x')}] ~\geq~ 0  \nonumber
\end{align}
For any $x, x' \in \supp{D}$, $\indct{g(x) = g(x')}$ and $\indct{f(x) = f(x')}$ have the same value (either both are 1 or 0), except when both $x$ and $x'$ belong to cells where $f$ assigns values  $r_1, r_2$ or $r$, since these are the only different cells between $f$ and $g$. For all $x$ and $x'$ in these cells, $g(x) = g(x') = r$, thus $\indct{g(x) = g(x')} = 1$. Therefore we have $\indct{g(x) = g(x')} - \indct{f(x) = f(x')} \geq 0$ for all $x,x'\in\supp{D}$. 
\qed\end{proof}
We next investigate the effect of cell merging on the $L_p$ norm calibration error $\ce{p, D}$. We show $\ce{p, D}$ can only decrease if cells are merged with score averaging. 

\begin{theorem} \label{thm:join_on_ce}
Let $D$ be a distribution over $X \times Y$, $f,g: X \rightarrow \mathop{\mathbb{R}}$ be predictors and $ r_1, r_2 \in \range{D}(f)$. If $g$ is obtained by $(r_1, r_2)$-cell merge of $f$ with score averaging, then
we have
\[
\ce{p, D}(g) \leq \ce{p, D}(f).
\]
\end{theorem}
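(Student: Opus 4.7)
The plan is to write out $\ce{p,D}(f)^p$ as a sum over cells of the form $\sum_{r' \in \range{D}(f)} p(r') \, |r' - \mu_f(r')|^p$, where $p(r') := \Pr_{x\sim D_X}[f(x)=r']$ and $\mu_f(r') := \E_{(x,y)\sim D}[y \mid f(x)=r']$. Then write the analogous expression for $g$ and observe that the two sums differ only in the terms corresponding to the cells being merged: for every $r' \in \range{D}(f) \setminus \{r_1, r_2\}$, the cell $f^{-1}(r')$ equals $g^{-1}(r')$, the predicted value is unchanged, and the conditional label mean is unchanged, so those terms are identical. Hence it suffices to compare the contribution of $\{r_1,r_2\}$ in $f$ with the contribution of the single merged value $r$ in $g$.

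First I would compute the relevant quantities for the merged cell. Writing $p_i = \Pr[f(x)=r_i]$ and $\mu_i = \mu_f(r_i)$, the new probability mass is $p_1+p_2$, the new conditional label mean is
\[
\mu_g(r) \;=\; \frac{p_1 \mu_1 + p_2 \mu_2}{p_1+p_2},
\]
since conditioning on $g(x)=r$ is the same as conditioning on $f(x)\in\{r_1,r_2\}$. By definition of score averaging, the new score is $r = (p_1 r_1 + p_2 r_2)/(p_1+p_2)$. Subtracting gives the crucial linearity observation
\[
r - \mu_g(r) \;=\; \lambda_1 (r_1 - \mu_1) + \lambda_2 (r_2 - \mu_2),
\]
where $\lambda_i = p_i/(p_1+p_2)$ is a convex combination.

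The reduction then follows from a single application of Jensen's inequality to the convex function $t\mapsto |t|^p$ (valid for $p\geq 1$):
\[
|r - \mu_g(r)|^p \;\leq\; \lambda_1 |r_1 - \mu_1|^p + \lambda_2 |r_2 - \mu_2|^p.
\]
Multiplying both sides by $p_1+p_2$ yields
\[
(p_1+p_2)\,|r - \mu_g(r)|^p \;\leq\; p_1 |r_1-\mu_1|^p + p_2 |r_2-\mu_2|^p,
\]
which is exactly the inequality needed to conclude $\ce{p,D}(g)^p \leq \ce{p,D}(f)^p$, and hence $\ce{p,D}(g) \leq \ce{p,D}(f)$.

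I do not anticipate a genuine obstacle here; the only step requiring care is justifying that the conditional mean of $y$ given $g(x) = r$ really is the $p_i$-weighted average of the $\mu_i$ (a routine application of the law of total expectation), and checking that the law of total expectation makes the non-merged cells contribute identically in both sums. The rest is the standard convexity trick for weighted $L_p$ averages.
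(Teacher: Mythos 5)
Your argument follows essentially the same route as the paper's proof: both reduce the comparison to the single merged cell, use score averaging to show that the new deviation $r - \mu_g(r)$ is the probability-weighted convex combination of the original deviations $r_i - \mu_i$, and then apply Jensen's inequality to $t \mapsto |t|^p$. The one place your writeup is not airtight is the claim that every $r' \in \range{D}(f)\setminus\{r_1,r_2\}$ contributes identically to both sums. That fails in the edge case where the averaged score $r$ coincides with a value already in $\range{D}(f)$: then $g^{-1}(r)$ is the union of \emph{three} cells of $f$, the conditional label mean $\mu_g(r)$ is no longer $(p_1\mu_1+p_2\mu_2)/(p_1+p_2)$, and the term for that third value does not cancel between the two sums. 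The paper handles this explicitly by carrying a third deviation/weight pair (its $e_3$, $w_3$) for the pre-existing cell $f^{-1}(r)$ through the same convex-combination-plus-Jensen computation; the key identity still holds because score averaging gives $r(w_1+w_2)=r_1w_1+r_2w_2$ while the third cell's score is unchanged. Your argument extends verbatim to a three-way convex combination, so the gap is easily closed, but as written the proof silently assumes $r \notin \range{D}(f)$.
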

\begin{proof}
First note that for any predictor $h:X\to\reals$ and all $x\in X$, we have
 \begin{align}
 |h(x) - \mathop{\mathbb{E}_{(x', y')\sim D}}[y'~|~h(x')=h(x)]| \nonumber 
&= |h(x) - \mathop{\mathbb{E}_{x'\sim D_X}}[\eta_D(x')~|~h(x')=h(x)]| \nonumber \\
&= |\mathop{\mathbb{E}_{x'\sim D_X}}[h(x) - \eta_D(x')~|~h(x')=h(x)]| \nonumber \\
&= |\mathop{\mathbb{E}_{x'\sim D_X}}[h(x') - \eta_D(x')~|~h(x')=h(x)]| \nonumber
 \end{align}
Where the last inequality holds since the expectation is conditioned on any $x'$ such that $h(x')=h(x)$. Thus we get
\begin{equation}\label{eqn:CE_reform}
    \ce{p, D}(h) = \big(~\mathop{\mathbb{E}_{x\sim D_X}}[|\mathop{\mathbb{E}_{x'\sim D_X}}[h(x') - \eta_D(x')\mid h(x')=h(x)]|^p]~\big)^{1/p}
\end{equation}
for any predictor $h$.
Now note that if $r_1 = r_2$ then $f=g$, and $\ce{p, D}(f) = \ce{p, D}(g)$. Otherwise, according to Equation \ref{eqn:CE_reform} above applied to $f$ and $g$:
% Lemma \ref{lemma:ce_eta}:
\begin{align}
      \ce{p, D}(f)^p - \ce{p, D}(g)^p &=  \mathop{\mathbb{E}_{x\sim D_X}}[|\mathop{\mathbb{E}_{x'\sim D_X}}[f(x') - \eta_D(x')\mid f(x')=f(x)]|^p] \nonumber \\
      &~~~- \mathop{\mathbb{E}_{x\sim D_X}}[|\mathop{\mathbb{E}_{x'\sim D_X}}[g(x') - \eta_D(x')\mid g(x')=g(x)]|^p] \nonumber \\
      &= \mathop{\mathbb{E}_{x\sim D_X}}[|\mathop{\mathbb{E}_{x'\sim D_X}}[f(x') - \eta_D(x')\mid f(x')=f(x)]|^p \nonumber \\
      &~÷~÷- |\mathop{\mathbb{E}_{x'\sim D_X}}[g(x') - \eta_D(x')\mid g(x')=g(x)]|^p] \nonumber
    \end{align}
  The cells that $f$ and $g$ have different expectations in the previous term are the ones with $f(x)$ equals to $r_1$ or $r_2$ or $r$. All members in these three cells are in the same cell in the range of $g$ with $g(x) = r$. So,
  \begin{align}
      \ce{p, D}(f)^p - \ce{p, D}(g)^p &=  |\mathop{\mathbb{E}_{x'\sim D_X}}[f(x') - \eta_D(x')\mid f(x')=r_1]|^p . \mathop{\mathbb{P}_{x\sim D_X}}[f(x)=r_1] \nonumber \\
      &~+ |\mathop{\mathbb{E}_{x'\sim D_X}}[f(x') - \eta_D(x')\mid f(x')=r_2]|^p . \mathop{\mathbb{P}_{x\sim D_X}}[f(x)=r_2] \nonumber \\
      &~+ |\mathop{\mathbb{E}_{x'\sim D_X}}[f(x') - \eta_D(x')\mid f(x')=r]|^p . \mathop{\mathbb{P}_{x\sim D_X}}[f(x)=r] \nonumber \\
      &~- |\mathop{\mathbb{E}_{x'\sim D_X}}[g(x') - \eta_D(x')\mid g(x')=r]|^p . \mathop{\mathbb{P}_{x\sim D_X}}[g(x)=r] \nonumber \\ \nonumber
  \end{align}
For the rest of the proof, we use the following notations:
\begin{align}
    e_1 &\vcentcolon= \mathop{\mathbb{E}_{x'\sim D_X}}[f(x') - \eta_D(x')\mid f(x')=r_1] \nonumber \\
    e_2 &\vcentcolon= \mathop{\mathbb{E}_{x'\sim D_X}}[f(x') - \eta_D(x')\mid f(x')=r_2] \nonumber \\
    e_3 &\vcentcolon= \mathop{\mathbb{E}_{x'\sim D_X}}[f(x') - \eta_D(x')\mid f(x')=r] \nonumber \\
    e' &\vcentcolon= \mathop{\mathbb{E}_{x'\sim D_X}}[g(x') - \eta_D(x')\mid g(x')=r] \nonumber \\
    w_1 &\vcentcolon= \mathop{\mathbb{P}_{x\sim D_X}}[f(x)=r_1] \nonumber \\
    w_2 &\vcentcolon= \mathop{\mathbb{P}_{x\sim D_X}}[f(x)=r_2] \nonumber \\
    w_3 &\vcentcolon= \mathop{\mathbb{P}_{x\sim D_X}}[f(x)=r] \nonumber \\
    w' &\vcentcolon=\mathop{\mathbb{P}_{x\sim D_X}}[g(x)=r] ~=~ w_1 + w_2 +w_3\nonumber
\end{align}
The latter equality holds since $g(x)=r$ if and only if $f(x) \in \{r_1, r_2, r\}$.
% , $w' = w_1 + w_2 + w_3$.
Now:
\begin{align} \label{eq:cepfp_cepgp}
    \ce{p, D}(f)^p - \ce{p, D}(g)^p &= |e_1|^p . w_1 + |e_2|^p . w_2 + |e_3|^p . w_3 - |e'|^p . (w_1 + w_2 + w_3)
\end{align}
Now we use the law of total expectation to rewrite $e'$ as $e_1$, $e_2$, and $e_3$:
\begin{align}
    e' &= \big[\mathop{\mathbb{E}_{x'\sim D_X}}[g(x') - \eta_D(x')\mid g(x')=r, f(x')=r_1] . w_1  \nonumber \\
    &~~~~~ + \mathop{\mathbb{E}_{x'\sim D_X}}[g(x') - \eta_D(x')\mid g(x')=r, f(x')=r_2] . w_2  \nonumber \\
    &~~~~~ +\mathop{\mathbb{E}_{x'\sim D_X}}[g(x') - \eta_D(x')\mid g(x')=r, f(x')=r] . w_3 \big]/ (w_1 + w_2 + w_3)  \nonumber \\
    % &~~~~/ (w_1 + w_2 + w_3)  \nonumber \\
    &= \big[
    (r - \mathop{\mathbb{E}_{x'\sim D_X}}[\eta_D(x')\mid f(x')=r_1]) . w_1  \nonumber \\
    &~~~~~ + (r - \mathop{\mathbb{E}_{x'\sim D_X}}[\eta_D(x')\mid f(x')=r_2]) . w_2 
    \nonumber \\
    &~~~~~ + (r - \mathop{\mathbb{E}_{x'\sim D_X}}[\eta_D(x')\mid f(x')=r]) . w_3 \big]/ (w_1 + w_2 + w_3)  \nonumber \\
    &= \big[
    (r + e_1 - r_1) . w_1  + (r + e_2 - r_2) . w_2 + (e_3) . w_3 \big] / (w_1 + w_2 + w_3) 
    \nonumber \\
    % &~~~~/ (w_1 + w_2 + w_3) \nonumber \\
    &=  \big[
    e_1 . w_1  + e_2 . w_2 + e_3 . w_3 \big] / (w_1 + w_2 + w_3) \nonumber
\end{align}
\begin{align}
    \Rightarrow |e'|^p &= \big|\frac{e_1 . w_1  + e_2 . w_2 + e_3 . w_3}{w_1 + w_2 + w_3}\big|^p  \nonumber \\
    &\leq \big[\frac{|e_1| . w_1  + |e_2| . w_2 + |e_3| . w_3}{w_1 + w_2 + w_3}\big]^p
    = \big[|e_1| . w'_1  + |e_2| . w'_2 + |e_3| . w'_3\big]^p,  \nonumber
\end{align}
in which $w'_i = w_i / (w_1 + w_2 + w_3)$. So, $w'_1 + w'_2 + w'_3 = 1$.
Function $|~.~|^p$ is a convex function for any $p \in \mathop{\mathbb{N}}$. Therefore, according to Jensen's inequality \cite{jensensInequality}:
\begin{align}
    &&\big[|e_1| . w'_1  + |e_2| . w'_2 + |e_3| . w'_3\big]^p &\leq |e_1|^p . w'_1  + |e_2|^p . w'_2 + |e_3|^p . w'_3 \nonumber \\
    \Rightarrow & &|e'|^p &\leq \frac{|e_1|^p . w_1  + |e_2|^p . w_2 + |e_3|^p . w_3}{w_1 + w_2 + w_3} \nonumber \\
    \Rightarrow & &\ce{p, D}(f)^p &\geq \ce{p, D}(g)^p \quad\text{(using Equation \ref{eq:cepfp_cepgp})}\nonumber \\
    \Rightarrow & &\ce{p, D}(f) &\geq \ce{p, D}(g). \nonumber
\end{align}
\qed\end{proof}

The following two observations demonstrate the impact of merging cells on classification loss, mean squared error and the probabilistic Kendall's Tau.

\begin{observation} \label{obs:join_on_closs}
If $g$ is obtained by $(r_1,r_2)$-cell merge of $f$ with score averaging (under the conditions of Definition \ref{def:cell_merge}), then $\bLo{D}(g)$ may be smaller, larger or equal to $\bLo{D}(f)$, and the same holds for $\mathrm{MSE}_D(g)$ and $\mathrm{KT}_{D}(g)$ in comparison with $\mathrm{MSE}_D(f)$ and $\mathrm{KT}_{D}(f)$ respectively.
\end{observation}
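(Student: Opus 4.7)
The observation asserts that each of $\bLo{D}$, $\mathrm{MSE}_D$, and $\mathrm{KT}_D$ can increase, decrease, or remain unchanged under a score-averaging cell merge as in Definition \ref{def:cell_merge}. The natural approach is to exhibit small explicit examples rather than any structural argument. A convenient template uses a finite feature space with three or four points, specifying marginal masses $D_\X(x_i)$ and regression values $\eta_D(x_i)$, and then choosing $f$ pointwise so the weighted average $r$ of $r_1, r_2$ is transparent. All three measures become finite sums that can be compared directly before and after the merge.

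For $\bLo{D}$ with fixed threshold $\theta$, equality is obtained by merging two cells whose scores lie strictly on the same side of $\theta$, so that $f_\theta = g_\theta$ pointwise. A decrease arises by splitting a high-$\eta$ region into two $f$-cells whose scores straddle $\theta$ but whose weighted average lies above $\theta$, so that after merging all such points are correctly classified; the mirrored construction (weighted average landing on the wrong side of $\theta$) yields an increase.

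For $\mathrm{MSE}_D$, a short computation using $r = (w_1 r_1 + w_2 r_2)/(w_1+w_2)$ shows that the change in MSE from the merge equals
\[
\tfrac{w_1 w_2}{w_1+w_2}\, (r_1 - r_2)\bigl(2(\bar\eta_1 - \bar\eta_2) - (r_1 - r_2)\bigr),
\]
where $\bar\eta_i = \Ex[\eta_D(x) \mid f(x) = r_i]$ and $w_i = \Pr[f(x) = r_i]$. All three possible signs of this expression are realized by appropriate choice of parameters: $r_1 - r_2 = 2(\bar\eta_1 - \bar\eta_2)$ gives equality, while larger or smaller values of $r_1-r_2$ relative to $2(\bar\eta_1 - \bar\eta_2)$ produce decrease or increase respectively.

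For $\mathrm{KT}_D$, the key observation is that merging eliminates all intra-merge discordances (paired points then share a score and contribute zero to the strict discordance indicator) but can alter inter-merge discordances with the remaining cells depending on where $r$ falls relative to their scores. I would construct a three-cell example where $r$ crosses the score of a third cell, so the sign of $g(x) - g(x')$ flips on a positive-mass set of pairs involving that cell; the sign of the corresponding $\eta$-difference then dictates whether discordances are created or destroyed, yielding both increase and decrease, while a choice of parameters placing $r$ between the two remaining scores (with net changes cancelling) yields equality. The main obstacle throughout is parameter bookkeeping—verifying that each weighted average and sign condition lands where intended—rather than any substantive structural issue.
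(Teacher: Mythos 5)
Your overall strategy---establishing all three behaviours by explicit finite examples---is the same as the paper's, but the details differ in ways worth noting. For $\mathrm{MSE}_D$ you derive a general closed form for the change under the merge, $\frac{w_1 w_2}{w_1+w_2}(r_1-r_2)\bigl(2(\bar\eta_1-\bar\eta_2)-(r_1-r_2)\bigr)$, which is correct (it follows from $\E[(y-r)^2\mid b_i]-\E[(y-r_i)^2\mid b_i]=(r-r_i)(r+r_i-2\bar\eta_i)$ and $r-r_i=\frac{w_{3-i}}{w_1+w_2}(r_{3-i}-r_i)$) and strictly more informative than the paper's single numerical instance with scores $0$ and $1$; it characterizes the sign rather than merely exhibiting each sign once. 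For the binary loss the paper is more careful on one point you gloss over: $\bLo{D}$ is evaluated on a \emph{thresholded} predictor, and the paper arranges two heavy auxiliary cells so that the optimal threshold is the same for $f$ and $g$, making the comparison threshold-independent; your fixed-$\theta$ version still witnesses the claim but proves a slightly weaker reading of it. For $\mathrm{KT}_D$ your argument is only a plan (``I would construct\ldots''); the mechanism you describe---intra-merge discordances can only be destroyed since merged pairs tie, while inter-merge discordances flip when $r$ crosses a third cell's score---is sound and matches what the paper's four-equal-cells example exploits, but your proposed equality witness (``net changes cancelling'') is vaguer than necessary: the clean choice, which the paper uses, is a constant $\eta_D$, under which there are no discordant pairs before or after the merge. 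None of these gaps is structural; with the KT example written out concretely your proof would be complete.
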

\begin{proof}
{\sl Binary Loss:} Let $f$ be a predictor and let $a$ and $b$ be two cells generated by $f$ with $f(x) = 0.4$ for all $x\in a$ and $f(x) = 0.8$ for all $x\in b$. Let $D$ be a distribution whose marginal assigns the same probability to these cells $D_X(a) = D_X(b) = 0.1$. Further, let's assume that there are two additional, heavier cells $c$ and $d$, with $D_X(c) = D_X(d) = 0.4$, and $f(x) = 0.45$ while $\eta_D(x) = 0$ for all $x\in c$ and $f(x) = 0.55$ while $\eta_D(x) = 1$ for all $x\in d$. Thus, independently of the regression function's values in the lighter cells $a$ and $b$, the best classification threshold for $f$ will be any $\theta\in(0.45,0.55)$, say $\theta = 0.5$.

Let $g$ be obtained from $f$  by a $(0.4, 0.8)$-cell merge. Then $g(x) = 0.6$ for all $x\in a\cup b$. Since $g(x) = f(x)$ for all  $x\in c\cup d$ (the heavier cells), $g$ will also be optimally thresholded with any $\theta\in(0.45,0.55)$, thus with optimal threshold, say $\theta = 0.5$, for both $g$ and $f$ we get $g_\theta(x) = 1 \neq 0 = f_\theta(x)$ for all $x\in a$. With slight abuse of notation, we let $\eta(a) = \Pr_{(x,y)\sim D}[ y =1 ~\mid~ x\in a]$ denote the probability of label $1$ generated conditioned on cell $a$. If $\eta(a) < 0.5$, then $\bLo{D}(g) > \bLo{D}(f)$, if $\eta(a) > 0.5$, then $\bLo{D}(g) < \bLo{D}(f)$ and if $\eta(a) = 0.5$, then $\bLo{D}(g) = \bLo{D}(f)$.

{\sl Kendall's Tau Coefficient:} Let's consider the same scenario as above, but now with the four cells $a,b,c$ and $d$ having equal probability weight, say $D_X(a) =  D_X(b) = D_X(c) = D_X(d) = 0.25$. As above, we denote the conditional label probabilities in these cells by $\eta_D(a)$, $\eta_D(b)$, $\eta_D(c)$, and $\eta_D(d)$ respectively. If $\eta_D(a) < \eta_D(c) < \eta_D(d) < \eta_D(b)$,
% \[\eta_D(a) < \eta_D(c) < \eta_D(d) < \eta_D(b)\]
then the scores assigned by $f$ are monotonic with respect to $\eta_D$, while the scores of $g$ are not. We thus get $1 = \mathrm{KT}_{D}(f) > \mathrm{KT}_{D}(g)$.
In case $\eta_D(c) < \eta_D(b) < \eta_D(a) < \eta_D(d)$, 
% \[\eta_D(c) < \eta_D(b) < \eta_D(a) < \eta_D(d)\]
the scores of $g$ are monotonic, but the scores of $f$ are not. Thus $1 = \mathrm{KT}_{D}(g) > \mathrm{KT}_{D}(f)$.
Finally, if $\eta_D$ is a constant function, then the cell merge does not change the Kendall's Tau.

{\sl Mean Squared Error:} To show that the same phenomena can occur for the MSE, let's consider a scenario where the predictor $f$ assigns value $0$ to all points in a cell $a$ and value $1$ to all points in a cell $b$, and let's assume  $D_X(a) = D_X(b)$. Upon merging them, their combined score becomes $0.5$. When $\eta(a) = 0$ and $\eta(b) = 1$, the MSE, conditioned on these cells, increases from $0$ to $0.25$. Conversely, $\eta(a) = 1$ and $\eta(b) = 0$, the mean squared error decreases from $1$ to $0.25$. If $\eta(a) = 0.25$ and $\eta(b) = 0.75$, then the mean squared error remains the same at $0.25$.
\qed\end{proof}

\subsection{Analysis of Average Label Assignment}\label{ss:label_avg}
We now analyze another operation, where the score for every cell of predictor $f$ is replaced with the true label average in that cell. We say $\bar{f}_D: X \rightarrow [0, 1]$ with
\[
\bar{f}_D(x) \coloneqq \mathop{\mathbb{E}_{(x', y')\sim D}}[y' \mid f(x)=f(x')].
\]
\emph{is obtained by average label assignment with respect to distribution $D$} from $f$.
This true average label with respect to the data-generating distribution is typically not available to user, however might be (approximately) estimated from samples.

\begin{theorem} \label{thm:average_on_ce_mse}
For distribution $D$ over $X \times Y$ and any predictor $f: X \rightarrow \mathop{\mathbb{R}}$ 
the predictor $\bar{f}_D(x)$ obtained by average label assignment with respect to distribution $D$ from $f$ satisfies the following for $\theta = 0.5$:
\begin{align*}
    \ce{p, D}(\bar{f}_D) = 0, \qquad &
    \mathrm{MSE}_{D}(\bar{f}_D) \leq \mathrm{MSE}_{D}(f),\\
    \bLo{D}((\bar{f}_D)_{\theta}) \leq \bLo{D}(f_{\theta}),  \qquad &
    \pc{D}(\bar{f}_D) \leq \pc{D}(f).
\end{align*}
\end{theorem}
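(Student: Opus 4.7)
My plan rests on the structural observation that $\bar{f}_D$ depends on $x$ only through $f(x)$, so each level set of $\bar{f}_D$ is a union of cells of $f$; probabilistically, $\bar{f}_D(X)$ is a version of the conditional expectation $\E[Y \mid f(X)]$. I would open with the probabilistic count bound, which falls out immediately from this observation: since $\{f(x)=f(x')\} \subseteq \{\bar{f}_D(x)=\bar{f}_D(x')\}$, taking expectations gives $\Pr_{x,x'\sim D_X}[\bar{f}_D(x)=\bar{f}_D(x')] \geq \Pr_{x,x'\sim D_X}[f(x)=f(x')]$, hence $\pc{D}(\bar{f}_D) \leq \pc{D}(f)$.

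Next I would verify calibration. For $r \in \range{D}(\bar{f}_D)$, the preimage $\bar{f}_D^{-1}(r)$ decomposes as a union of cells $f^{-1}(s_1),\dots,f^{-1}(s_k)$ of $f$, each satisfying $\E_{(x',y')\sim D}[y' \mid f(x')=s_i] = r$ by the definition of $\bar{f}_D$. The law of total expectation then expresses $\E_{(x',y')\sim D}[y' \mid \bar{f}_D(x')=r]$ as a convex combination of these values, which is again $r$; thus the integrand in the definition of $\ce{p,D}(\bar{f}_D)$ vanishes pointwise, yielding $\ce{p,D}(\bar{f}_D)=0$.

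For the MSE and classification inequalities I would exploit cell-wise optimality. The MSE part follows from the standard orthogonality identity for conditional expectation: expanding $(y-f(x))^2$ around $\bar{f}_D(x)$, the cross term vanishes because, conditional on $f(X)$, the factor $\bar{f}_D(X)-f(X)$ is $\sigma(f(X))$-measurable while $\E[y - \bar{f}_D(X) \mid f(X)]=0$ by construction, leaving $\mathrm{MSE}_D(f) = \mathrm{MSE}_D(\bar{f}_D) + \E_{x\sim D_X}[(\bar{f}_D(x)-f(x))^2] \geq \mathrm{MSE}_D(\bar{f}_D)$. For the $0/1$-loss at $\theta=0.5$, I would note that both $f_{0.5}$ and $(\bar{f}_D)_{0.5}$ are constant on each cell $C=f^{-1}(r)$, so the expected loss decomposes as a weighted sum over cells; on $C$, $(\bar{f}_D)_{0.5}$ predicts $\indct{\E[y \mid x \in C]\geq 0.5}$, which is by definition the Bayes-optimal constant label on $C$ and hence attains at most the conditional loss of any other cell-constant classifier, in particular $f_{0.5}$.

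The main delicate point is the calibration step: $\bar{f}_D$ may merge distinct cells of $f$ whose conditional label means happen to coincide, and I have to check that this coincidental merging does not disturb the calibration identity. It does not, because a convex combination of equal numbers equals that common value. Once the ``$\bar{f}_D = \E[Y \mid f(X)]$'' viewpoint is in place, each of the four inequalities reduces to a short, standard computation.
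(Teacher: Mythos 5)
Your proposal is correct and follows essentially the same route as the paper: the MSE and $0/1$-loss bounds come from the same cell-wise decomposition (your conditional-expectation orthogonality identity is the paper's bias--variance split in disguise, and your ``Bayes-optimal constant label on each cell'' observation is its per-cell case analysis), and the PC bound rests on the same fact that the level sets of $\bar f_D$ can only coarsen those of $f$. The only notable differences are that you are more careful than the paper on the calibration claim --- explicitly handling the case where $\bar f_D$ merges $f$-cells with coincident label means, which the paper dismisses as immediate --- and that your direct containment argument $\{f(x)=f(x')\}\subseteq\{\bar f_D(x)=\bar f_D(x')\}$ for the PC bound replaces the paper's appeal to its cell-merge theorem; both are sound.
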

\begin{proof}
First note that $\ce{p, D}(\bar{f}_D) = 0$ is immediate from the definition. Now
let $\range{D}(f) = \{s_1^f, s_2^f, ..., s_n^f\}$ and let $\{b_1^f, b_2^f, ..., b_n^f\}$ be the corresponding cells generated by $f$.
So, for any predictor $f: X \rightarrow \mathop{\mathbb{R}}$,
\begin{align*}
    \mathrm{MSE}_{D}(f) &= \mathop{\mathbb{E}_{(x, y)\sim D}}[(y - f(x))^2] \\
    &= \sum_{i \in [1, n]} \mathop{\mathbb{E}_{(x, y)\sim D}}[(y - f(x))^2 \mid x \in b_i^f] \cdot \mathop{\mathbb{P}_{x \sim D_X}}[x \in b_i^f]  \\
    &= \sum_{i \in [1, n]} \mathop{\mathbb{E}_{(x, y)\sim D}}[(y - s_i^f)^2 \mid x \in b_i^f] \cdot \mathop{\mathbb{P}_{x \sim D_X}}[x \in b_i^f]  \\
    &= \sum_{i \in [1, n]} (
        \mathop{\mathbb{E}_{(x, y)\sim D}}[y^2 \mid x \in b_i^f] -2s_i^f \mathop{\mathbb{E}_{(x, y)\sim D}}[y \mid x \in b_i^f] 
        + (s_i^f)^2) \cdot \mathop{\mathbb{P}_{x \sim D_X}}[x \in b_i^f]
\end{align*}
Using $\bar{y}_i^f$ as $\mathop{\mathbb{E}_{(x, y)\sim D}}[y \mid x \in b_i^f]$ for any $i \in [1, n]$ and the identity $\mathop{\mathbb{E}}[X^2] = \mathop{\mathbb{V}ar}(X) + (\mathop{\mathbb{E}}[X])^2$, we rewrite the expression as:

\begin{align*}
    \mathrm{MSE}_{D}(f) &= \sum_{i \in [1, n]} (\mathop{\mathbb{V}ar_{(x, y)\sim D}}[y \mid x \in b_i^f] + (\bar{y}_i^f)^2 -2s_i^f\bar{y}_i^f + (s_i^f)^2) \cdot \mathop{\mathbb{P}_{x \sim D_X}}[x \in b_i^f] \\
    &= \sum_{i \in [1, n]} (\mathop{\mathbb{V}ar_{(x, y)\sim D}}[y \mid x \in b_i^f] + (\bar{y}_i^f-s_i^f)^2) \cdot \mathop{\mathbb{P}_{x \sim D_X}}[x \in b_i^f]
\end{align*}
The values of $f$ and $\bar{f}_D$ are different, while the cells of $f$ are a refinement of those of $\bar{f}_D$,
i.e., any two elements from $\supp{D}$ from the same cell of $f$ are also in the same cell of $\bar{f}_D$. So, $\forall i \in [1, n], b_i^{\bar{f}_D}=b_i^f, \bar{y}_i^{\bar{f}_D}=\bar{y}_i^f,$ and $\mathop{\mathbb{V}ar_{(x, y)\sim D}}[y \mid x \in b_i^{\bar{f}_D}]=\mathop{\mathbb{V}ar_{(x, y)\sim D}}[y \mid x \in b_i^f]$. Also, according to the definition of ${\bar{f}_D}$, $\forall i \in [1, n], s_i^{\bar{f}_D}=\bar{y}_i^{\bar{f}_D}$. So we complete the proof for the MSE by:
\begin{align*}
    \mathrm{MSE}_{D}(f) - \mathrm{MSE}_{D}({\bar{f}_D}) &= \sum_{i \in [1, n]} ((\bar{y}_i^f-s_i^f)^2 - (\bar{y}_i^{\bar{f}_D}-s_i^{\bar{f}_D})^2) \cdot \mathop{\mathbb{P}_{x \sim D_X}}[x \in b_i^f] \\
    &= \sum_{i \in [1, n]} (\bar{y}_i^f-s_i^f)^2 \cdot \mathop{\mathbb{P}_{x \sim D_X}}[x \in b_i^f] \geq 0
\end{align*}
Now we rewrite the classification loss of predictor $f$ using the $n$ cells:
\begin{align*}
    \bLo{D}(f_\theta) &= \Ex_{(x,y)\sim D}\indct{f_\theta(x) \neq y} \\
    &= \sum_{i \in [1, n]} \Ex_{(x,y)\sim D}[\indct{f_\theta(x) \neq y} \mid x \in b_i^f] \cdot \mathop{\mathbb{P}_{x \sim D_X}}[x \in b_i^f] \\
    &= \sum_{i \in [1, n]} \Ex_{(x,y)\sim D}[\indct{\indct{f(x) \geq \theta} \neq y} \mid x \in b_i^f] \cdot \mathop{\mathbb{P}_{x \sim D_X}}[x \in b_i^f].
\end{align*}
Let's consider the expectation part of this expression for one arbitrary cell:
\begin{align}
    & \Ex_{(x,y)\sim D}[\indct{\indct{f(x) \geq \theta} \neq y} \mid x \in b_i^f] \nonumber \\
    ~=~ & \Ex_{x\sim D_X}\Big[
    \indct{f(x) < \theta} \cdot \eta_D(x)
    ~+~  \indct{f(x) \geq \theta} \cdot (1-\eta_D(x))
    \mid x \in b_i^f\Big] \label{eq:loss_subexp_bin}
\end{align}
The classification on cell changes if and only if the label assigned to that cell (with $\theta=0.5$) changes. The label on the whole cell is constant. Without loss of generality suppose cell $b_i^f$ is labelled $0$ under $f$ and labelled $1$ under $\bar{f}_D$. Consequently, $s_i^f < 0.5$ and $\mathop{\mathbb{E}_{(x, y)\sim D}}[y \mid x \in b_i^f] \geq 0.5$, which means $\mathop{\mathbb{E}_{x\sim D_X}}[\eta_D(x) \mid x \in b_i^f] \geq 0.5$. Now we rewrite equation \ref{eq:loss_subexp_bin} for both predictors $f$ and $\bar{f}_D$:
\begin{align*}
    \Ex_{(x,y)\sim D}[\indct{\indct{f(x) \geq \theta} \neq y} \mid x \in b_i^f] & ~=~ \Ex_{x\sim D_X}[\eta_D(x)  \mid x \in b_i^f]\\
    \Ex_{(x,y)\sim D}[\indct{\indct{\bar{f}_D(x) \geq \theta} \neq y} \mid x \in b_i^f] & ~=~ \Ex_{x\sim D_X}[1 - \eta_D(x)  \mid x \in b_i^f]
\end{align*}
Since $\mathop{\mathbb{E}_{x\sim D_X}}[\eta_D(x) \mid x \in b_i^f] \geq 0.5$:
\begin{align*}
    &~~\Ex_{x\sim D_X}[1 - \eta_D(x)  \mid x \in b_i^f] \leq \Ex_{x\sim D_X}[\eta_D(x)  \mid x \in b_i^f] ~\implies~\\
    & \Ex_{(x,y)\sim D}[\indct{\indct{f(x) \geq \theta} \neq y} \mid x \in b_i^f] \geq \Ex_{(x,y)\sim D}[\indct{\indct{\bar{f}_D(x) \geq \theta} \neq y} \mid x \in b_i^f].
\end{align*}
Thus the classification loss on any arbitrary cell for predictor $f$ is greater than or equal to the loss for $\bar{f}_D(x)$, completing our proof that $\bLo{D}((\bar{f})_\theta) \leq \bLo{D}(f_\theta)$.
% that predictor $\bar{f}_D(x)$ leads to an improved classification loss.

When the initial scores are replaced with the average of true labels, it is possible for two cells of $f$ to have equal new scores. In such cases, these cells are merged, and by Theorem \ref{thm:join_on_pc}, it follows that $\pc{D}(\bar{f}_D) \leq \pc{D}(f)$.
\qed\end{proof}
Next we present an analogous result for monotonicity.
\begin{observation} \label{obs:average_on_kt}
Let $\bar{f}_D(x)$ be the predictor obtained by average label assignment with respect to some distribution $D$ from some predictor $f$. Then the probabilistic Kendall's Tau coefficient of  $\bar{f}_D(x)$ may be smaller, larger or equal to the Kendall's Tau coefficient of $f$ with respect to $D$.
\end{observation}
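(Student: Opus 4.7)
To prove the three-way observation, I will exhibit three separate small examples, each on a discrete feature space, witnessing respectively that $\mathrm{KT}_D(\bar f_D) > \mathrm{KT}_D(f)$, $\mathrm{KT}_D(\bar f_D) < \mathrm{KT}_D(f)$, and $\mathrm{KT}_D(\bar f_D) = \mathrm{KT}_D(f)$. The guiding observation is that after average label assignment, the cell scores are completely determined by the cell means $\mathbb{E}[\eta_D(x)\mid x\in b_i^f]$; since both $f$ and $\bar f_D$ are constant on each cell, only cross-cell pairs can be discordant, and discordance between two cells depends only on how the relative order of the two cell-scores compares to the pointwise signs of $\eta_D(x)-\eta_D(x')$. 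Averaging preserves the within-cell structure and re-sorts cells by their conditional mean of $\eta_D$; whether this helps, hurts, or does nothing depends on how $f$ was ordering the cells to begin with and how the pointwise $\eta_D$ values are distributed within each cell.

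For \emph{equality}, take any $D$ with constant regression function $\eta_D$. Then for every pair $(\eta_D(x)-\eta_D(x'))(f(x)-f(x'))=0$, so no pair is discordant for either $f$ or $\bar f_D$, giving $\mathrm{KT}_D(f)=\mathrm{KT}_D(\bar f_D)=1$. For the \emph{increase} case, I will use two cells $a,b$ of equal marginal mass with $\eta_D\equiv 0.1$ on $a$ and $\eta_D\equiv 0.9$ on $b$, and a predictor with $f(a)=0.8$, $f(b)=0.2$ that reverses the $\eta_D$ ordering; every cross-cell pair is then discordant, while averaging gives $\bar f_D(a)=0.1$, $\bar f_D(b)=0.9$, which is strictly monotone in $\eta_D$ and yields $\mathrm{KT}_D(\bar f_D)=1>\mathrm{KT}_D(f)$.

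The main obstacle, and the part that requires actual thought, is the \emph{decrease} case: I need a situation where $f$ orders the cells in a way that matches the \emph{pointwise} comparisons of $\eta_D$ on most cross-cell pairs, yet the cell \emph{means} of $\eta_D$ point the other way, so that averaging flips the cell order and turns most concordant pairs into discordant ones. This is possible precisely when the within-cell distribution of $\eta_D$ is skewed enough that the conditional mean is dragged to the opposite side of a neighbouring cell's $\eta_D$ values. Concretely, I will use cells $a,b$ of equal mass $1/2$, letting $a$ consist of a $90\%$ sub-mass with $\eta_D=0.1$ and a $10\%$ sub-mass with $\eta_D=1$, and letting $b$ be the constant $\eta_D\equiv 0.15$. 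Then $\bar y_a=0.19>0.15=\bar y_b$, but $90\%$ of cross-cell pairs $(x_a,x_b)$ satisfy $\eta_D(x_a)<\eta_D(x_b)$. Taking $f(a)=0.2<0.3=f(b)$ makes these $90\%$ of pairs concordant under $f$, whereas $\bar f_D(a)=0.19>\bar f_D(b)=0.15$ reverses the cell order, making those same $90\%$ of pairs discordant under $\bar f_D$. A direct computation of the probability of discordance then gives $\mathrm{KT}_D(\bar f_D)<\mathrm{KT}_D(f)$, completing the three cases.
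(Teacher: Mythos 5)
Your proposal is correct and follows essentially the same strategy as the paper's proof: all three cases are handled by two-cell examples, with equality coming from a constant $\eta_D$, and the decrease case relying on exactly the paper's mechanism of a skewed within-cell distribution of $\eta_D$ that drags the cell's conditional mean to the opposite side of the neighbouring cell's values, so that average label assignment reorders the cells against the majority of pointwise comparisons. Your increase example (two constant-$\eta_D$ cells whose $f$-order is simply reversed and then corrected by averaging) is a slightly simpler variant than the paper's, but the approach is the same and the deferred discordance computations all check out.
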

    \begin{proof}
Let's consider a predictor $f$ that generates only two distinct values $r_a$ and $r_b$, say $r_a < r_b$ and let $a$ and $b$ be the corresponding cells. 
As before, we denote denote the expected label in these cells by $\eta_D(a)$ and $\eta_D(b)$.
        
We first consider the case that both cells $a$ and $b$ contains four distinct domain points, and the values of $\eta_D$ on the four points in cell $a$ are three times $0.1$, and once $1$, and $0.2$ for all four points in $b$. Thus $\eta_D(a) = 0.325$ and $\eta_D(b) = 0.2$, and 
        \begin{align*}
            &\mathrm{KT}_{D}(f) = 1 - 2 \cdot \frac{8}{56} = \frac{5}{7},
            &\mathrm{KT}_{D}(\bar{f}_D) = 1 - 2 \cdot \frac{24}{56} = \frac{1}{7}.
        \end{align*}
 Thus, in this case, substituting the scores with the average true labels has weakened the monotonicity of the predictor.

 Now consider the same scenario with the only difference being that the values of $\eta_D$ for the points in region $a$ are now $0.1$, twice $0.3$, and $1$. In this case the monotonicity, as measured by the probabilistic Kendall's Tau, has improved:
        \begin{align*}
            &\mathrm{KT}_{D}(f) = 1 - 2 \cdot \frac{24}{56} = \frac{1}{7},
            &\mathrm{KT}_{D}(\bar{f}_D) = 1 - 2 \cdot \frac{8}{56} = \frac{5}{7}.
        \end{align*}        
 Lastly, if the regression function is a constant function, then $\mathrm{KT}_{D}(f) =\mathrm{KT}_{D}(\bar{f}_D)$.
\qed\end{proof}

\section{Experimental Evaluation of Decision Tree Based Models}\label{sec:experiments}
In our experimental evaluation we compare standard methods for calibration to a simple model that is inherently interpretable, namely a Decision Tree (DT). 
Note that for a decision tree, the induced cells are inherently interpretable, and it is also straght-forward to control the number of cells, thus this methods straightforwardly satisfies our basic requirements for interpretability.
Our goal is then to determine how this simple interpretable method compares to non-interpretable standard methods, in terms of our remaining desiderata, namely calibration, classification accuracy, approximation of the regression function and monotonicity. 
We include two standard methods for calibration through post-processing, namely  Platt Scaling (PS) \cite{plattscaling} and Isotonic Regression \cite{isotonic} in our comparison. A support vector machine (SVM) is first trained as the base model for Platt Scaling and Isotonic Regression. Additionally we compare to another tree based calibration model, Probability Calibration Tree (PCT) \cite{PCT}. 

\subsubsection*{Evaluation Metrics Employed}    
Since some of our desiderata, namely calibration, approximation of the regression function and monotonicity directly depend on the unknown values of the regression function $\eta_D$, there is no immediate way to assess these from finite data. 
We employ commonly used metrics that we list below, as well as a novel metric that we introduce.
For classification accuracy we evaluate the empirical binary loss \cite{closs}; 
for calibration we evaluate the (empirical) Expected calibration error (ECE) \cite{metric_ece}; for approximation of the regression function we evaluate the Root Mean Square Error (RMSE)\cite{RMSE}, and for monotonicity we evaluate the Area under the ROC curve (AUC)\cite{AUC}. Another metric for calibration that we evaluate is the Area under the Validity Curve (AUC$_V$)\cite{metric_validity_auc}. And in addition to these metrics from the literature, we introduce a novel calibration metric that we term Probability Deviation Error (PDE). 

For the definitions of these metrics below, we let $x_i\in X$ and $y_i \in \{0, 1\}$ denote the features and label of a single sample. $D^{(n)}$ denotes the collection of $n$ samples:
\[
D^{(n)} \vcentcolon= ((x_1, y_1), (x_2, y_2), ..., (x_n, y_n))
\]
\paragraph{Classification (0/1)-Loss} In our experiments, we utilize the threshold $\theta = 0.5$ for
\[
\bLo{n}(f_\theta) = \frac{1}{n}\sum_{i=1}^{n}
\begin{cases}
(y_i)^2 & \text{if $f(x_i) \leq \theta$} \\
(1-y_i)^2 & \text{otherwise}.
\end{cases}
\]

\paragraph{Root Mean Square Error (RMSE) \cite{RMSE}}
\[\rmse{n}(f) = \left(\frac{1}{n}\sum_{i=1}^{n}(f(x_i) - y_i)^2\right)^{1/2}\]

\paragraph{Area Under the ROC Curve (AUC)\cite{AUC}} Given a predictor $f$, using different thresholds $\theta\in[0,1]$, we obtain classifiers $f_\theta$ with increasing
True Positive and False Positive Rates (TPR and FPR) over the sample points. 
These pairs of rates yield curve (where TPR is viewed as a function of FPR), and $\auc$ is defined as the area under this curve.
This is standard metric to evaluate the monotonicity of a predictor with respect to the regression function. 
If the model generates wrong scores (in terms of pointwise probability estimates), but in the correct order (that $f$ is monotonic with respect to $\eta_D$), then $\auc$ is still high. 

\paragraph{Expected Calibration Error (ECE)}
This criterion compares the average predicted scores and the average of true labels with respect to a given set of bins $b_1, b_2, ..., b_B$, where the bins form a partition of the space or dataset \cite{metric_ece,MBCT,FamigliniCC23}. We let $w_i$ denote the fraction of data points contained in bin $b_i$. 
ECE is then defined as follows:
\[
\mathrm{ECE}_{p} \vcentcolon= \big(\sum_{i=1}^{B} w_i \cdot \mathrm{PCE}(b_i)^p \big)^{1/p}.
\]
where $\mathrm{PCE}$ is the \emph{Partition Calibration Error}, which is the difference between the average values generated by $f$ and the average of labels in a bin:
\[
\mathrm{PCE}(b_i) \vcentcolon= \frac{| \sum_{j=1}^{n} (f(x_j)-y_j)  \indct{x_j \in b_i}|}{\sum_{j=1}^{n}\indct{x_j \in b_i}}
= \left|\frac{1}{|b_i|}\sum_{x_j\in b_i}f(x_j) - \frac{1}{|b_i|}\sum_{x_j\in b_i} y_j \right|
\]
When no binning is provided, uniform mass binning is employed, that is the produced scores are sorted and allocated into a fixed number $B$ of equally weighted bins. The resulting criterion is denoted by $
\mathrm{ECE}_{B, p} \vcentcolon= \big(\frac{1}{B} \sum_{i=1}^{B} \mathrm{PCE}(b_i)^p \big)^{1/p}.
$
While $\mathrm{ECE}$ is widely used to evaluate calibration models \cite{naeini2015bayesian,guo2017calibration,blasiok2022unifying,FamigliniCC23}, it can be a problematic measure when the bins used do not correspond to the actual cells of the predictor $f$.  ECE then effectively evaluates a different predictor, namely the predictor that results from $f$ when the scores are averaged in each given bin. We discuss how this can result in distorted conclusion in Appendix Section \ref{sec:critique_ece}.

\paragraph{Probability Deviation Error (PDE)}
To address the issues of ECE (see appendix Section \ref{sec:critique_ece}), we propose a new metric which we term Probability Deviation Error (PDE). The PDE compares the point-wise scores with the average label in each bin, thereby fixing the problem associated with the ECE. On predefined bins $b_1, b_2, ..., b_B$ with weights $w_1, w_2, ..., w_B$, the $L_p$ norm $\mathrm{PDE}$ is defined as follows:
\[
\mathrm{PDE}_p \vcentcolon= \big(\sum_{i=1}^{B} w_i \cdot \mathrm{PPD}(b_i)^p \big)^{1/p}.
\]
where $\mathrm{PPD}$, or \emph{Partition Probability Deviation} is the average difference between point-wise scores generated by $f$ in a bin and the label average in the bin:
\[
\mathrm{PPD}(b_i) \vcentcolon= \frac{\sum_{j=1}^{n} |f(x_j)-\hat{y}_i|  \indct{x_j \in b_i}}{\sum_{j=1}^{n}\indct{x_j \in b_i}}
=
\frac{1}{|b_i|}\sum_{x_j\in b_i} |f(x_j)-\hat{y}_i|
\]
where $\hat{y}_i = \frac{1}{|b_i|}\sum_{x_k\in b_i} y_k$ is the average label in bin $b_i$.
% $\hat{y}_i = \frac{y_j \sum_{j=1}^{n}\indct{x_j \in b_i}}{\sum_{j=1}^{n}\indct{x_j \in b_i}}$. 
 If no partition into bins is given, as for ECE, uniform mass binning with $B$ bins is used by default. In this case, we denote the criterion as $\mathrm{PDE}_{B,p}\vcentcolon= \big(\frac{1}{B} \sum_{i=1}^{B} \mathrm{PPD}(b_i)^p \big)^{1/p}$. We illustrate that this metric better reflects quality of calibration than ECE, by empirically comparing these two measures  on synthetically generated data (see Appendix section \ref{sec:eval_pde}).

\paragraph{Area Under the Validity Curve ($\auc_V$)}
This metric has recently been proposed to evaluate calibration \cite{metric_validity_auc}.
We first define the \emph{validity function $V:\reals^X \times [0,1]\to[0,1]$} that assigns to each threshold $\epsilon\in[0,1]$ the probability mass of the area where predictor $f$ is $\epsilon$-valid as measured by the $L_1$ norm calibration error: 
$$V(f, \epsilon) = \Pr_{(x, y)\sim D}[|f(x) - \mathop{\mathbb{E}_{(x', y')\sim D}}[y'~|~f(x')=f(x)]| \leq \epsilon]$$
This function generates a curve, the \emph{validity curve}, whose integral over $[0,1]$ is the metric $\auc_V(f)$ \cite{metric_validity_auc}. Its relation to the \emph{$L_1$ norm calibration error} for $f:\X\to[0,1]$ has been shown to satisfy $\mathrm{CE}_1(f) ~= ~1 - \mathrm{AUC}_V(f)$ \cite{metric_validity_auc}.

With finite data $D^{(n)}$, the validity function $V$ is estimated as follows \cite{metric_validity_auc}:
\[
\hat{V}(f, \epsilon)=\frac{1}{n} \sum_{i=1}^{n} \indct{|f(x_i) - \mathop{\hat{\mathbb{E}}_{(x, y)\sim D^{(n)}}}[y|f(x) = f(x_i)]| \leq \epsilon}
\]
where $\mathop{\hat{\mathbb{E}}_{(x, y)\sim D^{(n)}}}[y~|~f(x) = p]| \vcentcolon= \frac{\sum_{i=1}^{n} y_i  \indct{f(x_i) = p}}{\sum_{i=1}^{n} \indct{f(x_i) = p}}$. 
This latter empirical expectation counts the number of samples with the same score. But on a finite dataset no two (or very few) points may have the same score. The measure thus requires an appropriate binning method. Prior work \cite{metric_validity_auc} involved averaging scores over the uniform mass bins as in ECE, effectively evaluating a different predictor.

We used a different method to estimate the validity function with finite number of samples, based on the K-nearest neighbors (KNN). 
K-nearest neighbor based $\mathrm{AUC}_V$ estimation
is the area under the following estimated validity curve: 
\[
\hat{V}_\mathrm{KNN}(f, \epsilon)=\frac{1}{n} \sum_{i=1}^{n} \indct{|f(x_i) - \mathop{\hat{\mathbb{E}}_{\mathrm{KNN} (x, y)\sim D^{(n)}}}[y|f(x) = f(x_i)]| \leq \epsilon},
\]
where the empirical expectation estimation is
\[
\mathop{\hat{\mathbb{E}}_{\mathrm{KNN} (x, y)\sim D^{(n)}}}[y~|~f(x) = p]| \vcentcolon= \frac{\sum_{i=1}^{n} y_i  \indct{f(x_i) \in \operatorname{k-nn}(p)}}{k}
\]
with $\operatorname{k-nn}(p)$ being the set of $k$ samples with the closest $f$-scores to $p$.
% \end{definition}

Our method takes into account the scores that predictor $f$ assigns to each datapoint, instead of relying on the average scores over different bins. We thus avoid evaluating a modified version of the model. 
We denote the KNN based $\mathrm{AUC}_V$ estimation by $\mathrm{AUC}_{V, \mathrm{KNN}}$ and employed this in our experiments.

\subsubsection*{Datasets} \label{sec:real_datasets}
We used 36 datasets for binary and multi-class classification tasks. All 36 datasets are from UCI \cite{UCI} and their properties are summarized in Table \ref{tab:datasets}.
\begin{table}
\caption[Real world datasets]{\label{tab:datasets}Real world datasets used in our experiments.}
\begin{scriptsize}
\begin{minipage}{0.49\textwidth}
\centering
\begin{tabular}{lccc}
\hline
Dataset                & Instances & Attributes & Classes \\ \hline
audiology              & 226       & 69         & 24      \\
bank-marketing         & 41188     & 19         & 2       \\
bankruptcy             & 10503     & 64         & 2       \\
car-evaluation         & 1728      & 6          & 4       \\
cervical-cancer        & 858       & 32         & 2       \\
colposcopy             & 287       & 62         & 2       \\
credit-rating          & 690       & 15         & 2       \\
cylinder-bands         & 512       & 39         & 2       \\
german-credit          & 1000      & 20         & 2       \\
hand-postures          & 78095     & 39         & 2       \\
htru2                  & 17898     & 8          & 2       \\
iris                   & 150       & 4          & 3       \\
kr-vs-kp               & 3196      & 36         & 2       \\
mfeat-factors          & 2000      & 216        & 10      \\
mfeat-fourier          & 2000      & 76         & 10      \\
mfeat-karhunen         & 2000      & 64         & 10      \\
mfeat-morph            & 2000      & 6          & 10      \\
mfeat-pixel            & 2000      & 240        & 10    
\end{tabular}
\end{minipage}
\hfill
\begin{minipage}{0.49\textwidth}
\centering
\begin{tabular}{lccc}
\hline
Dataset                & Instances & Attributes & Classes \\ \hline
mice-protein           & 1080      & 80         & 8       \\
new-thyroid            & 215       & 5          & 3       \\
news-popularity        & 39644     & 59         & 2       \\
nursery                & 12960     & 8          & 5       \\
optdigits              & 5620      & 64         & 10      \\
page-blocks            & 5473      & 10         & 5       \\
pendigits              & 10992     & 16         & 10      \\
phishing               & 1353      & 10         & 3       \\
pima-diabetes          & 768       & 8          & 2       \\
segment                & 2310      & 20         & 7       \\
shuttle                & 58000     & 9          & 7       \\
sick                   & 3772      & 29         & 2       \\
spambase               & 4601      & 57         & 2       \\
taiwan-credit          & 30000     & 23         & 2       \\
tic-tac-toe            & 958       & 9          & 2       \\
vote                   & 435       & 16         & 2       \\
vowel                  & 990       & 14         & 10      \\
yeast                  & 1484      & 8          & 10
  
\end{tabular}
\end{minipage}
\end{scriptsize}
\end{table}

\subsubsection*{Results}
We evaluate the listed metrics of the four calibration methods on all real $36$ world datasets. For each dataset and method, we average over $10$ repetitions, and each time randomly partitioning the samples into training, calibration, and test sets ($40.5\%$, $49.5\%$ and $10\%$ respectively).
An SVM with Gaussian kernel is the base model for PS and IR post-processing calibration methods. 
Training of the tree-based calibration models (PCT and DT) involved a cost-complexity post-pruning step. The optimal cost-complexity parameter is found via $5$-fold cross-validation on the calibration set. Afterwards, the whole calibration set is used to train the calibration model and the model is pruned using the found optimal factor. 
To evaluate the models with L1-norm-ECE, we use uniform-mass binning with $32$ bins. For L1-norm-PDE, we used the leaves of the tree for models PCT and DT. For DT this corresponds to the cells generated by the predictor. There are no meaningful cells for PS and IR, thus no PDE is reported. 
\begin{table*}[h]
\caption{\label{tab:win_count}The calibration methods are compared using the above metrics. We report the number of times each method is the top performer. The numbers in each column do not add up to the number of datasets as multiple models may have won simultaneously. The pre-fixed bins for metric PDE are the leaves generated by PCT and DT.
We use $k=10$ for  $\mathrm{AUC}_{V, \mathrm{KNN}}$.
}
\centering
\begin{tabular}{lcccccc}
\hline
Method  & RMSE & 0/1-loss & AUC & $\mathrm{ECE}_{B=32, p=1}$            & $\mathrm{PDE}_{p=1}$ & $\mathrm{AUC}_{V, \mathrm{KNN}}$   \\ \hline
\multicolumn{1}{l|}{PS}             & 21             & 22                             & 27            & 7                 & -  &  5  \\
\multicolumn{1}{l|}{IR}             & 25            & 24                            & 28            & 14                 & -  & 9   \\
\multicolumn{1}{l|}{PCT}            & \textbf{31}   & \textbf{31}                   & \textbf{32}   & \textbf{26}       & 19  & \textbf{15}   \\
\multicolumn{1}{l|}{\textbf{DT}}    & 24            & 24                            & 19             & 21                & \textbf{36} & 7    \\ \hline 
\end{tabular}
\end{table*}

Table \ref{tab:win_count} summarizes the results over the $36$ datasets. For each of the four methods and six metrics, we report how often the method obtained the best score. We count cases of ties towards all winning methods
(which is why the columns in that table don't always sum up to $36$). 
Note that the simple decision tree (DT) is the only predictor evaluated here that can be considered interpretable. The other three methods produce infinitely many different scores (their effective range is infinite), and a user can not reasonably be expected to have a notion of the shapes of the resulting cells. The summary shows that in our experiments the simple decision tree is a predictor that performs similarly well as PS and IR on all metrics and performs best among all four methods in terms of PDE. The PCT method outperforms DT on most metrics. However we would argue that the overall performance of DT is a worth-while trade-off for interpretability.

\section{Concluding Discussion}
The goal of this work is to provide a systematic framework for understanding different aspects of calibration and to highlight the importance of taking interpretability into account when promoting calibration. Calibration is a notion that is inherently aimed at providing users with better understanding of label certainty. Our axiomatic framing and analysis highlight which aspects of a calibrated predictor can improve human comprehension of the provided scores (namely interpretable cells, not too large number of cells and monotonicity with respect to the regression function of the data-generating process), and show how some aspects fulfill other important purposes (accuracy, and pointwise approximation of the regression function). The three levels of analysis (axioms/properties, distributional measures of distance from these and empirical measures) further clarify the higher level concepts that frequently cited empirical measures are aimed at. 

Providing confidence scores to end users without a way of clearly communicating the meaning and range of validity of these scores might pose more risks in terms of effects of downstream decisions than not providing any confidence scores at all (for example when a high confidence score instills a false sense of certainty). We hope that our work contributes to and will inspire more investigations into  interpretability for calibrated scores.

\subsubsection*{Acknowledgements}
This research was funded by an NSERC discovery grant.\\
{\bf Disclosure of Interests.} The authors have no competing interests to declare that are
relevant to the content of this article.

\bibliographystyle{unsrtnat}
\bibliography{references}  %%% Uncomment this line and comment out the ``thebibliography'' section below to use the external .bib file (using bibtex) .

%%% Uncomment this section and comment out the \bibliography{references} line above to use inline references.
% \begin{thebibliography}{1}

% 	\bibitem{kour2014real}
% 	George Kour and Raid Saabne.
% 	\newblock Real-time segmentation of on-line handwritten arabic script.
% 	\newblock In {\em Frontiers in Handwriting Recognition (ICFHR), 2014 14th
% 			International Conference on}, pages 417--422. IEEE, 2014.

% 	\bibitem{kour2014fast}
% 	George Kour and Raid Saabne.
% 	\newblock Fast classification of handwritten on-line arabic characters.
% 	\newblock In {\em Soft Computing and Pattern Recognition (SoCPaR), 2014 6th
% 			International Conference of}, pages 312--318. IEEE, 2014.

% 	\bibitem{hadash2018estimate}
% 	Guy Hadash, Einat Kermany, Boaz Carmeli, Ofer Lavi, George Kour, and Alon
% 	Jacovi.
% 	\newblock Estimate and replace: A novel approach to integrating deep neural
% 	networks with existing applications.
% 	\newblock {\em arXiv preprint arXiv:1804.09028}, 2018.

% \end{thebibliography}

\appendix
\section{Exploring the Probabilistic Count (PC)}\label{sec:probabilistic_count}
In this section, we exhibit characteristics of our newly introduced measure, the probabilistic count (PC). We first establish a connection between PC and the true size of the effective range. Subsequently, we provide some illustrative examples.
% we elaborate more on this metric using some examples.

\begin{theorem} \label{thm:pc_upperbound}
For distribution $D$ over $X \times Y$ and predictor $f: X \rightarrow \mathop{\mathbb{R}}$,
we have 
$\pc{D}(f) \leq |\range{D}(f)|$. Equality $\pc{D}(f) = |\range{D}(f)|$ holds if and only if all cells of $f$ have the same probability.
\end{theorem}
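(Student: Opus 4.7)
The plan is to reduce the statement to a standard convexity inequality on the probability vector of the cells. Let $\range{D}(f) = \{r_1, \ldots, r_k\}$ (so $k = |\range{D}(f)|$) and set $p_i = \Pr_{x \sim D_X}[f(x) = r_i]$. These are the masses of the cells, so $p_i \geq 0$ and $\sum_{i=1}^{k} p_i = 1$.

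The first step is to rewrite the denominator of $\pc{D}(f)$ in terms of the $p_i$. Since $x$ and $x'$ are drawn independently from $D_X$, and $f(x) = f(x')$ exactly when both land in the same cell,
\[
\Pr_{x, x' \sim D_X}[f(x) = f(x')] = \sum_{i=1}^{k} p_i^2,
\]
so $\pc{D}(f) = 1 / \sum_{i=1}^{k} p_i^2$. The target inequality $\pc{D}(f) \leq k$ is therefore equivalent to $\sum_i p_i^2 \geq 1/k$.

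The second step is to apply Cauchy--Schwarz (equivalently, the power-mean inequality or Jensen's inequality applied to the convex function $t \mapsto t^2$) to the vector $(p_1, \ldots, p_k)$ against the all-ones vector:
\[
1 = \Bigl(\sum_{i=1}^{k} p_i\Bigr)^2 \leq k \cdot \sum_{i=1}^{k} p_i^2.
\]
Rearranging gives $\sum_i p_i^2 \geq 1/k$ and hence $\pc{D}(f) \leq k = |\range{D}(f)|$.

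The equality characterization comes for free from the Cauchy--Schwarz equality condition: equality in $(\sum p_i)^2 \leq k \sum p_i^2$ holds if and only if the vector $(p_1, \ldots, p_k)$ is a scalar multiple of $(1, \ldots, 1)$, i.e., all the $p_i$ are equal, which given $\sum p_i = 1$ forces $p_i = 1/k$ for every $i$. This is exactly the condition that all cells of $f$ have the same probability under $D_X$. There is no real obstacle here; the only subtlety is the clean reduction from the pair-collision probability to $\sum_i p_i^2$, which uses that the effective range enumerates exactly the distinct values $f$ takes with positive mass, so cells with $p_i = 0$ contribute nothing and do not interfere with the count $k$.
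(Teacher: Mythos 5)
Your proof is correct and follows essentially the same route as the paper's: both reduce the collision probability to $\sum_i p_i^2$ over the cell masses and then apply Cauchy--Schwarz against the all-ones vector, with the equality case read off from the parallelism condition. No gaps.
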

\begin{proof}
Assume $\range{D}(f)=n = \{r_1, r_2, \ldots, r_n\}$, and $p_i \coloneqq \mathop{\mathbb{P}_{x \sim D_X}}[f(x) = r_i]$.
\begin{align*}
    \frac{1}{\pc{D}(f)} &= \mathop{\mathbb{P}_{x, x'\sim D_X}} [f(x) = f(x')] \\
    &= \sum_{i=1}^{n}\big[ \mathop{\mathbb{P}_{x, x' \sim D_X}}[f(x)=f(x') | f(x) = r_i] ~\cdot~ \mathop{\mathbb{P}_{x, x' \sim D_X}}[f(x) = r_i]\big] \\
    &=  \sum_{i=1}^{n}\big[ \mathop{\mathbb{P}_{x' \sim D_X}}[f(x')=r_i] ~\cdot~ \mathop{\mathbb{P}_{x \sim D_X}}[f(x) = r_i]\big] 
    ~=~ \sum_{i=1}^{n}p_i^2. 
\end{align*}
We define two vectors with size $n$ as $u = (p_1, ..., p_n)$ and $v = (1, ..., 1)$. Using Cauchy-Schwarz inequality $|\langle u, v \rangle|^2 \leq |\langle u, u \rangle| \cdot |\langle v, v \rangle|$
% \[
% |\langle u, v \rangle|^2 \leq |\langle u, u \rangle| . |\langle v, v \rangle|,
% \]
and the equality holds if and only if $u$ and $v$ are parallel. With this we get $ |\langle u, v \rangle|^2 = 
    (\sum_{i=1}^{n} p_i)^2 \leq (\sum_{i=1}^{n} p_i^2) \cdot n$ which implies $\sum_{i=1}^{n} p_i^2 \geq \frac{1}{n}$.
So, $\pc{D}(f) \leq n$. The equality holds if and only if $(p_1, ..., p_n)$ and $(1, ..., 1)$ are parallel which means that all $p_i$s are equal.
\qed\end{proof}
The probabilistic count depends on the number of cells and their probability weights. 
We here present a series of examples of this metric. In each example, the cells created in the range of $f$ from $\supp{D}$ are visualized in a bar. The length of each partition represents its probability in the distribution $D$. 

PC is not monotonic with the actual number of cells. The function in Figure \ref{fig:PC:outlier} generates 9 cells while its PC is 4.28. If we compare this predictor with a function that generates 5 balanced cells, which leads to PC equals to 5, we can show that the predictor in Figure \ref{fig:PC:outlier} has less PC while it has more cells.
    \begin{figure}[!ht]
        \begin{center}
        \includegraphics[width=0.7\textwidth]{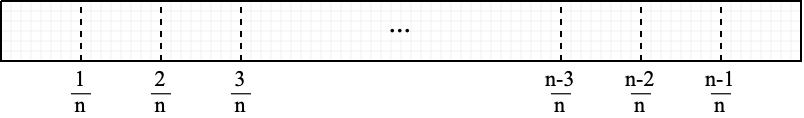}
        \caption{Probabilistic count example on $n$ cells with the same weight; in  this case $\pc{D}(f) =n$ is the number of cells. 
    }
        \label{fig:PC:balanced}
        \end{center}
    \end{figure}
    \begin{figure}[!ht]
        \begin{center}
        \includegraphics[width=0.7\textwidth]{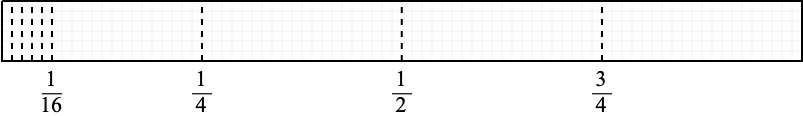}
        \caption{Probabilistic count example on 9 cells including 5 small cells; the cells with small weights do not have much effect on the probabilistic count; while we have 9 cells, $\pc{D}(f)$ is close to $4$; this shows that PC emphasizes the number of significant cells. $\pc{D}(f)=\frac{1}{\frac{1}{4}^2\cdot3 + \frac{19}{80}^2 + \frac{1}{80}^2 \cdot 5} \approx 4.08$.}
        \label{fig:PC:outlier}
        \end{center}
    \end{figure}
    % \item 
    % \vspace{-1.3cm}
    \begin{figure}[!ht]
        \begin{center}
        \includegraphics[width=0.7\textwidth]{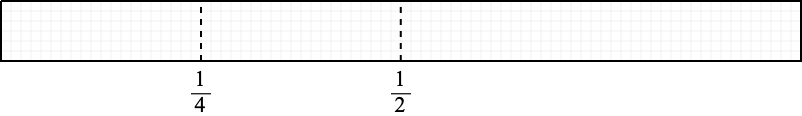}
        \caption{Probabilistic count example on three cells with different weights; $\pc{D}(f)=\frac{1}{\frac{1}{4}^2\cdot2 + \frac{1}{2}^2} \approx 2.66$.}
        \label{fig:PC:imbalanced}
        \end{center}
    \end{figure}    
    % \item 
    \begin{figure}[!ht]
        \begin{center}
        \includegraphics[width=0.7\textwidth]{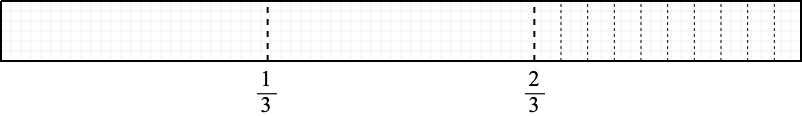}
        \caption{Probabilistic count example on 12 cells including 10 cells distributed on a third; for three equally weighted cells, the probabilistic count is 3; we have split the third partition into ten small cells with the same weights. $\pc{D}(f)=\frac{1}{\frac{1}{3}^2\cdot2 + \frac{1}{30}^2\cdot10} \approx 4.28$.}
        \label{fig:PC:3_with_1}
        \end{center}
    \end{figure}
     % \item 
    \begin{figure}[!ht]
        \begin{center}
        \includegraphics[width=0.7\textwidth]{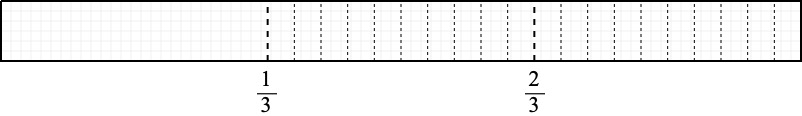}
        \caption{Probabilistic count example on 21 cells including 20 cells distributed on two thirds; we have split each of the second and the third cells into ten small cells with the same weights; $\pc{D}(f)=\frac{1}{\frac{1}{3}^2 + \frac{1}{30}^2\cdot20} = 7.5$.\\}
        \label{fig:PC:3_with_2}
        \end{center}
    \end{figure}
% \end{itemize}
\section{Critiquing the Expected Calibration Error} \label{sec:critique_ece}
The empirical expected calibration error (ECE) is a metric that is frequently employed to measure calibration \cite{naeini2015bayesian,guo2017calibration,blasiok2022unifying}. It averages scores within each bin, rather than evaluating the individual scores, which we show makes it less accurate. When a bin contains both overconfident and underconfident scores, they average out, making the performance seem better than it is, as illustrated in Example \ref{exmp:ece_criticism}.

\begin{example} \label{exmp:ece_criticism}
Consider a predictor $f$ evaluated with a partition that contains a bin $b$ with $f(x_i) = 0.35$ for half the samples, and $f(x_i) = 0.65$ for the other half. Assume that on this bin the regression function satisfies $\eta_D(x) = 0.5$ for all $x$, and that there re sufficiently many sample from the bin that the empirical average is close to $0.5$.
Now when evaluating the ECE  on this bin, the result would be $|\frac{0.35 + 0.65}{2} - 0.5| = 0$, indicating flawless performance of $f$ in terms of calibration, which is not correct.
In contrast, the probability deviation error (PDE) takes individual scores into account. For the same bin, the PDE evaluates to $\frac{|0.35 - 0.5| + |0.65 - 0.5|}{2} = 0.15$, which corresponds to the correct calibration error.
\end{example}

\section{Empirically Motivating the Probability Deviation Error}
\label{sec:eval_pde}
To motivate our proposed measure PDE beyond the Example \ref{exmp:ece_criticism} above, we empirically compare PDE with ECE on a large synthetic dataset.
We compare their \emph{bias},
where the \emph{bias} of a calibration metric $\mu$ for predictor $f:\X\to[0,1]$ over $n$ samples $D^{(n)}$ with respect to the distribution $D$ is defined as (\cite{MBCT}):   
\begin{align*}
   \mathrm{Bias}(D, D^{(n)}, \mu) \coloneqq \frac{1}{m} \sum_{i=1}^{m} \mu(D^{(n)}(f) - \frac{1}{n}\sum_{i=1}^{n}(|f(x_i) - \eta_D(x_i)|),
\end{align*}
where $m$ is the number of experiments, each over datasets of size $n$. We used $m=10$.
Since the regression function is essential to assess bias, we synthetically generated $27,500$ samples, generating labels according to a predefined regression function. Thus, we have access to $\eta_D(x_i)$ for our generated points. The generated samples are subsequently employed to train a decision tree or a PCT. In experiments where a PCT is trained, an additional $22,500$ samples are generated to train an SVM, which serves as the base model required for the PCT.
We used different sizes of test sets (generated with the same procedure), and
Uniform-mass binning for  ECE and PDE with a different number of bins ranging from $2$ to $64$. Our analysis indicate that PDE has mostly lower bias than ECE, provided there are enough samples per bin, see Figure \ref{fig:ece_pde_bias_umass} below.
\begin{figure}
    \centering
    \begin{subfigure}{0.48\textwidth}
        \centering
        \includegraphics[trim={0 0 0 1.3cm},clip,width=\textwidth]{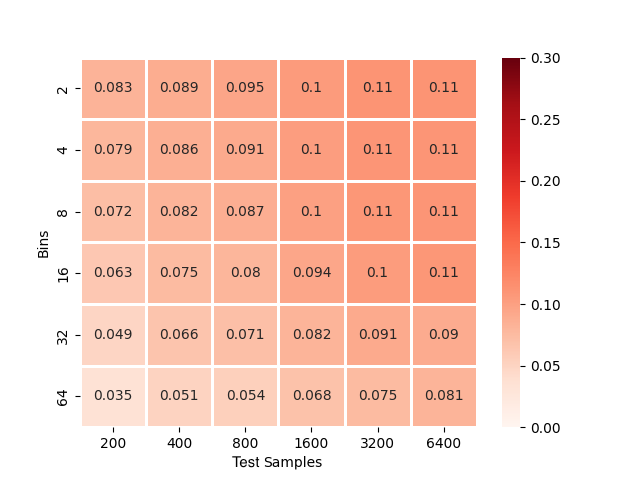}
        \caption{\small{ECE of DT}}
    \end{subfigure}
    \begin{subfigure}{0.48\textwidth}
        \centering
        \includegraphics[trim={0 0 0 1.3cm},clip,width=\textwidth]{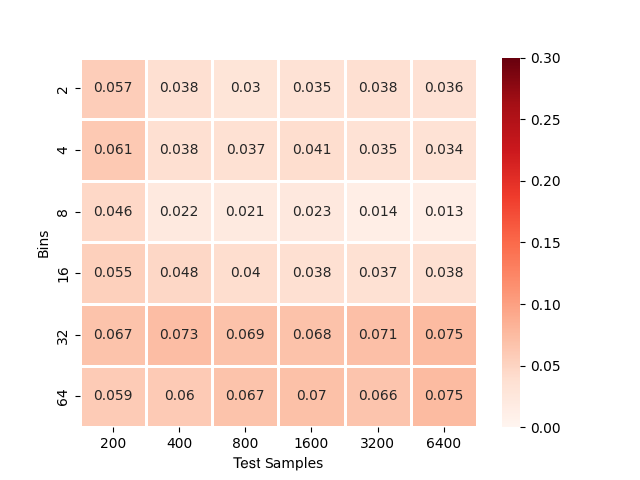}
        \caption{\small{PDE of DT}}
    \end{subfigure}

    \begin{subfigure}{0.48\textwidth}
        \centering
        \includegraphics[trim={0 0 0 1.3cm},clip,width=\textwidth]{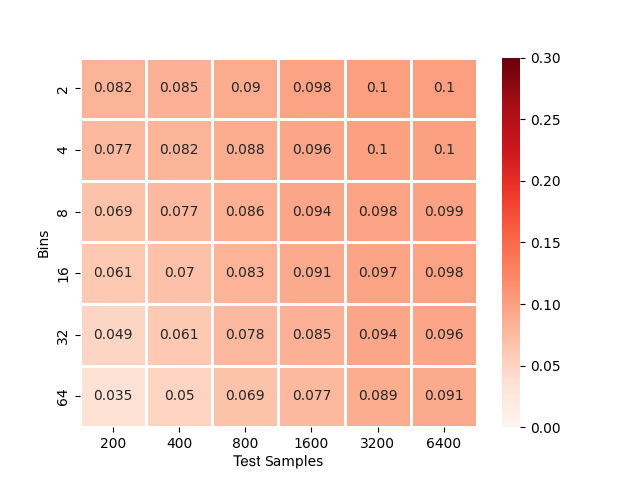}
        \caption{\small{ECE of PCT}}
    \end{subfigure}
    \begin{subfigure}{0.48\textwidth}
        \centering
        \includegraphics[trim={0 0 0 1.3cm},clip,width=\textwidth]{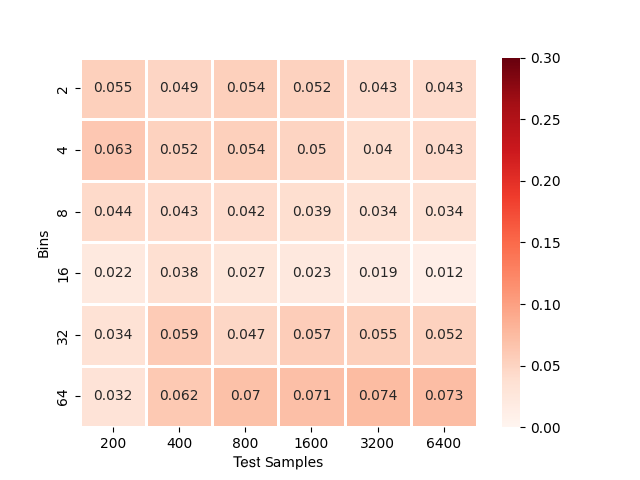}
        \caption{\small{PDE of PCT}}
    \end{subfigure}

    \caption[Calibration metric biases using uniform-mass binning]{
     Evaluating the bias of ECE and PDE using uniform-mass binning, with the former exhibiting larger bias in most cases, especially with larger numbers of test samples}. 
    \label{fig:ece_pde_bias_umass}
\end{figure}

We have proposed a Breadth First Search Leaf (BFSL) binning approach for partitioning samples when evaluating a model, as an alternative to the conventional uniform-mass binning method. This technique is applicable to any tree-based calibration models. There are several calibration models that are using a tree structure (\cite{PCT, MBCT}). Using BFSL, we take advantage of the structure of the calibration models to define the regions. The main motivation of this approach is to provide a more interpretable and explainable way of partitioning the predicted probabilities as interpretability is one of the desired properties in calibration framework.
First, by performing a breadth first search starting from the root of the original tree, the shortest sub-tree with $B$ leaves is extracted, in which $B$ is the required number of bins. The leaves of this sub-tree are the regions generated by this method. In this method, samples that follow similar paths in the tree are in the same region. The number of bins can be chosen based on the specific requirements of the analysis, and the resulting bins are expected to capture the properties of the model's behavior.
The benefits of this approach include the ability to interpret regions due to the utilization of the tree's structure. Furthermore, these regions rely on the characteristics of the samples, not just their scores. In some of the experiments, we have used the leaves generated by the original tree in the calibration model without performing breadth first search. This partitioning allows us to evaluate the model using the structure of the tree without additional pruning. We have analyzed the bias of ECE and PDE using BFSL binning method in Figure \ref{fig:ece_pde_bias_bfsl}.

\begin{figure}
    \centering
    \begin{subfigure}{0.48\textwidth}
        \centering
        \includegraphics[trim={0 0 0 1.3cm},clip,width=\textwidth]{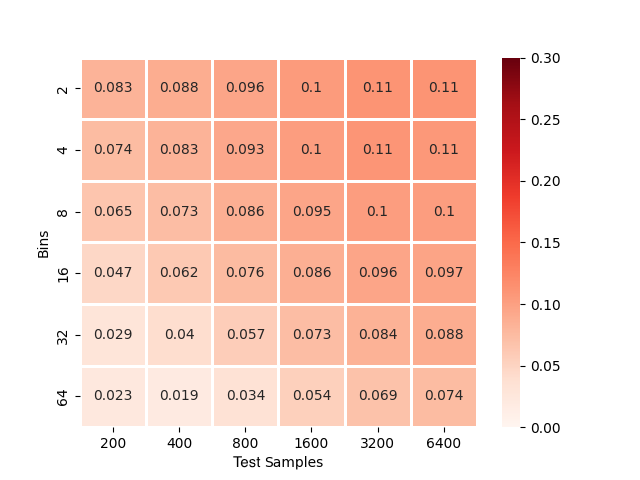}
        \caption{\small{ECE of DT}}
    \end{subfigure}
    \begin{subfigure}{0.48\textwidth}
        \centering
        \includegraphics[trim={0 0 0 1.3cm},clip,width=\textwidth]{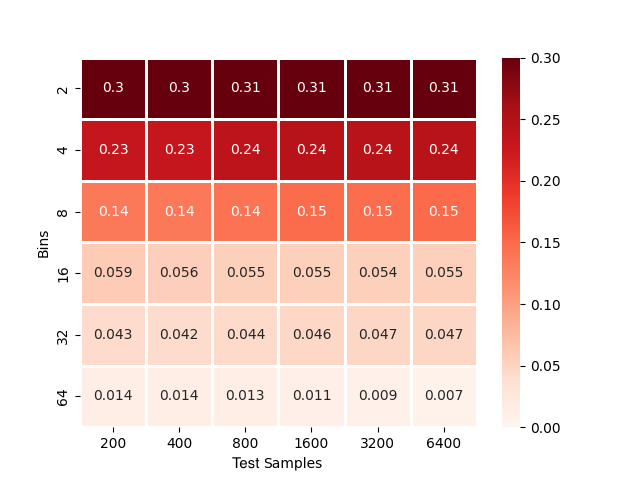}
        \caption{\small{PDE of DT}}
    \end{subfigure}

    \begin{subfigure}{0.48\textwidth}
        \centering
        \includegraphics[trim={0 0 0 1.3cm},clip,width=\textwidth]{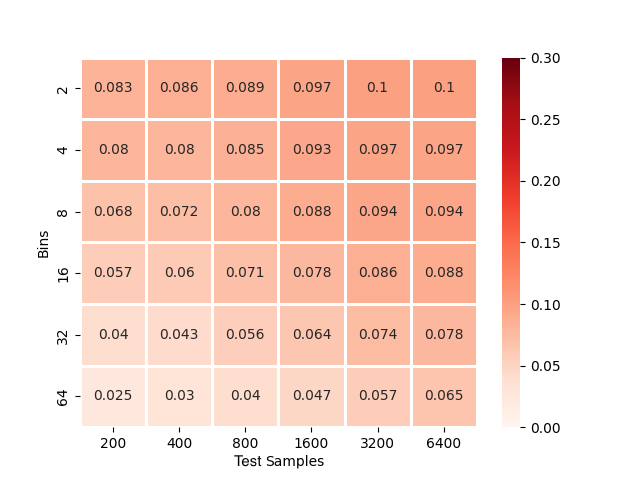}
        \caption{\small{ECE of PCT}}
    \end{subfigure}
    \begin{subfigure}{0.48\textwidth}
        \centering
        \includegraphics[trim={0 0 0 1.3cm},clip,width=\textwidth]{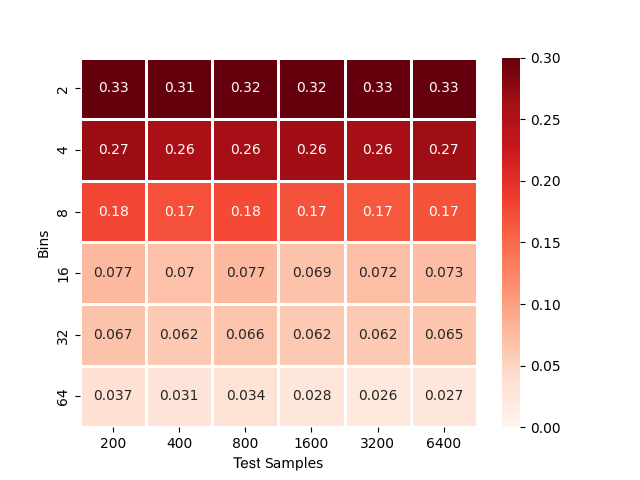}
        \caption{\small{PDE of PCT}}
    \end{subfigure}

    \caption[Calibration metric biases using BFSL binning]{
     Evaluating the bias of ECE and PDE using BFSL binning, with the former exhibiting larger bias in most cases, especially with larger numbers of test samples}. 
    \label{fig:ece_pde_bias_bfsl}
\end{figure}

To conduct a more thorough investigation of this experiment and its analysis, we employed a more sophisticated synthetic data generator and repeated the experiment under identical conditions but with different data samples. The results of this experiment are presented in Figure \ref{fig:ece_pde_bias_umass_complexdata} and Figure \ref{fig:ece_pde_bias_bfsl_complexdata}.

\begin{figure}
    \centering
    \begin{subfigure}{0.48\textwidth}
        \centering
        \includegraphics[trim={0 0 0 1.3cm},clip,width=\textwidth]{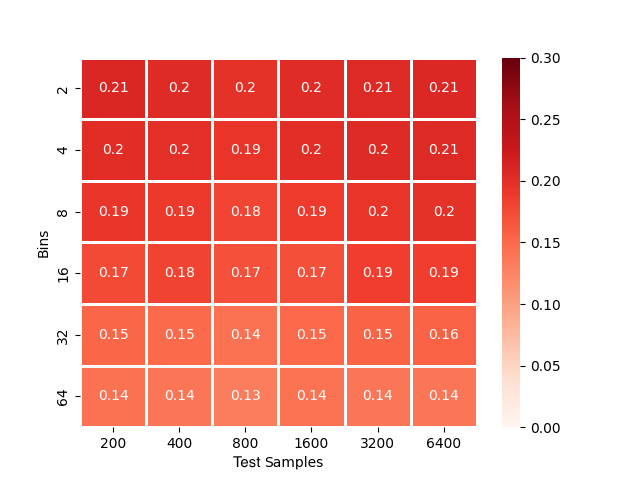}
        \caption{\small{ECE of DT}}
    \end{subfigure}
    \begin{subfigure}{0.48\textwidth}
        \centering
        \includegraphics[trim={0 0 0 1.3cm},clip,width=\textwidth]{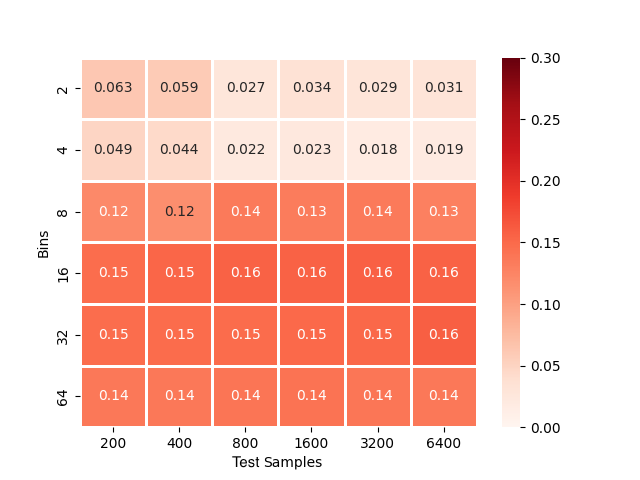}
        \caption{\small{PDE of DT}}
    \end{subfigure}

    \begin{subfigure}{0.48\textwidth}
        \centering
        \includegraphics[trim={0 0 0 1.3cm},clip,width=\textwidth]{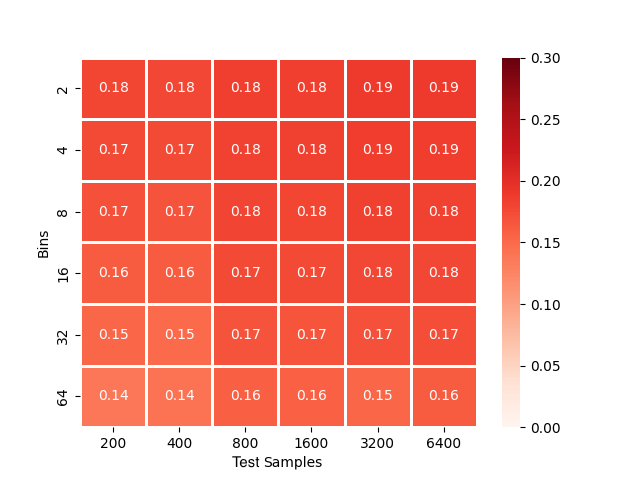}
        \caption{\small{ECE of PCT}}
    \end{subfigure}
    \begin{subfigure}{0.48\textwidth}
        \centering
        \includegraphics[trim={0 0 0 1.3cm},clip,width=\textwidth]{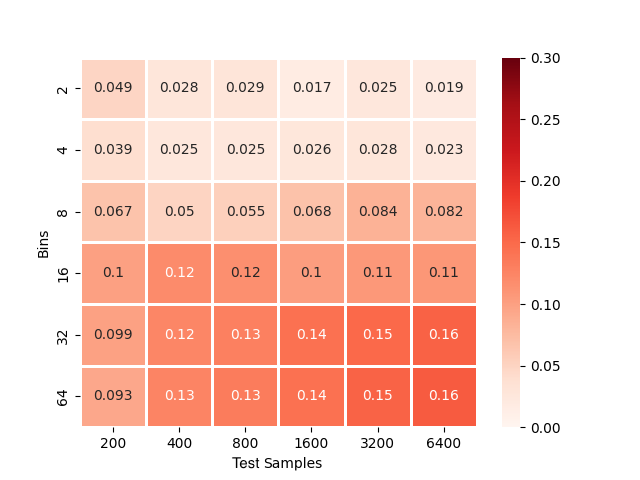}
        \caption{\small{PDE of PCT}}
    \end{subfigure}

    \caption[Calibration metric biases using uniform-mass binning on complex data]{
     Evaluating the bias of ECE and PDE using uniform-mass binning on complex data, with the former exhibiting larger bias in most cases, especially with larger numbers of test samples}. 
    \label{fig:ece_pde_bias_umass_complexdata}
\end{figure}

\begin{figure}
    \centering
    \begin{subfigure}{0.48\textwidth}
        \centering
        \includegraphics[trim={0 0 0 1.3cm},clip,width=\textwidth]{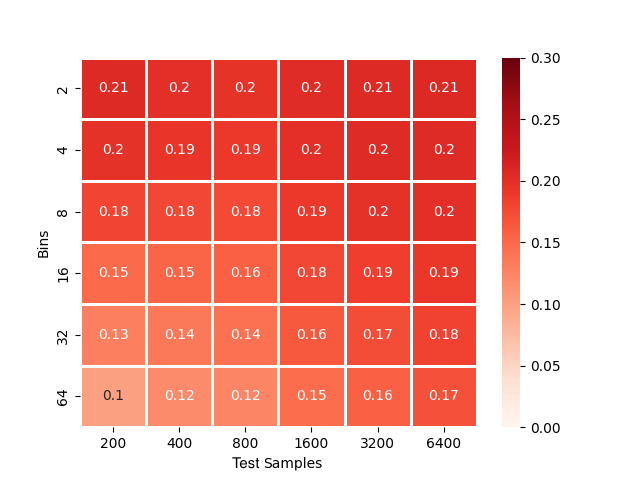}
        \caption{\small{ECE of DT}}
    \end{subfigure}
    \begin{subfigure}{0.48\textwidth}
        \centering
        \includegraphics[trim={0 0 0 1.3cm},clip,width=\textwidth]{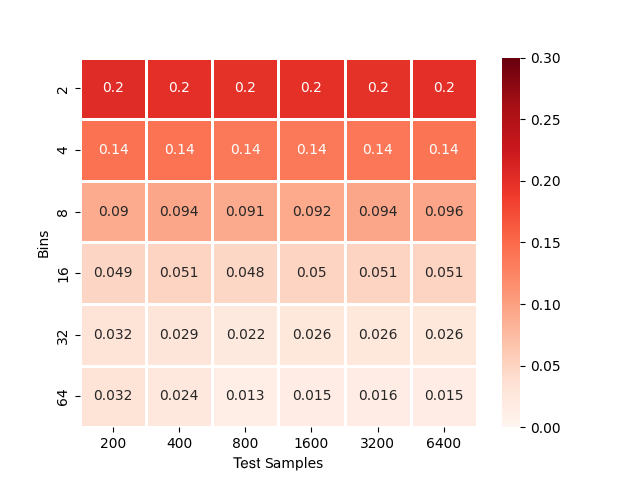}
        \caption{\small{PDE of DT}}
    \end{subfigure}

    \begin{subfigure}{0.48\textwidth}
        \centering
        \includegraphics[trim={0 0 0 1.3cm},clip,width=\textwidth]{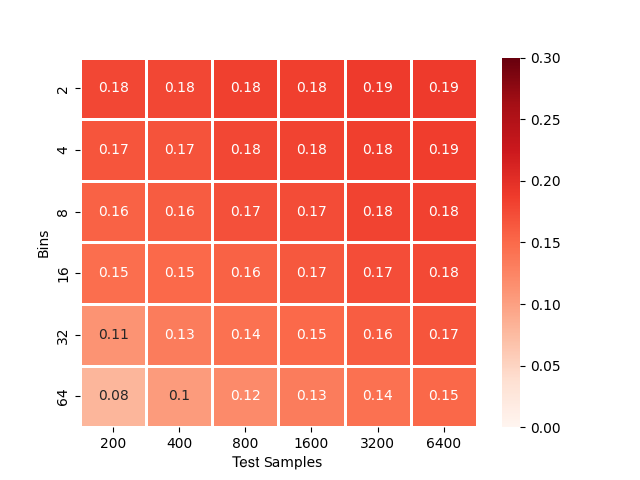}
        \caption{\small{ECE of PCT}}
    \end{subfigure}
    \begin{subfigure}{0.48\textwidth}
        \centering
        \includegraphics[trim={0 0 0 1.3cm},clip,width=\textwidth]{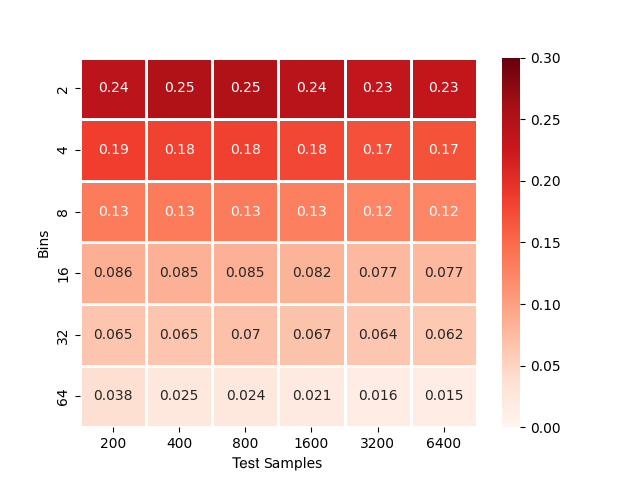}
        \caption{\small{PDE of PCT}}
    \end{subfigure}

    \caption[Calibration metric biases using BFSL binning on complex data]{
     Evaluating the bias of ECE and PDE using BFSL binning on complex data, with the former exhibiting larger bias in most cases, especially with larger numbers of test samples}. 
    \label{fig:ece_pde_bias_bfsl_complexdata}
\end{figure}

\FloatBarrier
\section{Analysing the Calibration-Classification Tradeoff}
\label{sec:calib_class_tradeoff}
To investigate the trade-off between calibration and classification accuracy, we conducted an experiment utilizing decision tree models of varying complexity across multiple datasets. By adjusting the size of the trees, reflected in the number of leaves, we examined how changes in model complexity affect both probability deviation error (PDE) and classification loss.

The experiment begins with an evaluation of decision tree models trained on multiple datasets. These models varied in complexity, as characterized by the size of the tree, i.e., the number of leaves it contains. For each dataset, decision trees of different sizes were trained and subsequently evaluated using two key metrics: Probability deviation error (PDE) and classification loss. We allocated half of each dataset for training the decision tree, with the remaining portion serving as the test set. The datasets we have used are the 36 UCI datasets described in Section \ref{sec:real_datasets} and two synthetic datasets that are also used in the experiments in Section \ref{sec:eval_pde}.

PDE, a calibration metric, was employed to assess the quality of the predicted probabilities produced by the models. It offers a measure of the divergence between the predicted and actual class probabilities. On the other hand, classification loss, a performance metric, was used to evaluate the model's proficiency in correctly classifying instances. It quantifies the discrepancy between predicted and actual class labels.

Additionally, we incorporated the cost-complexity pruned decision tree to contrast its performance against the models trained in this experiment.

The results observed from the experiment were insightful. Figure \ref{fig:calib_vs_class_selected} presents the findings of this experiment for a select group of datasets. Firstly, we found that PDE consistently increased as the size of the decision tree increased across all datasets. This trend indicates that as the decision trees grew in complexity, the calibration quality decreased. In other words, the models' predicted probabilities became less representative of the true class probabilities as the decision trees became larger. The deductions proved in Section \label{cell_merg_and_averaging} supports the results here as we have shown by merging the cells induced by predictor, we will improve the calibration, and by decreasing the size of the decision trees we are doing the same action.

Contrary to the PDE trend, classification loss generally decreased as the size of the decision tree increased. This indicates that more complex trees were typically more successful in their classification tasks. This characteristic persists until the decision tree reaches a size that leads to overfitting on the data. The specific tree size at which this occurs varies across datasets, dependent on their unique characteristics. Figures \ref{fig:calib_vs_class_selected:hand-postures} and \ref{fig:calib_vs_class_selected:phishing} represent this characteristic. However, it's worth noting that this was not an absolute trend, as exceptions have been observed in a few of datasets, a case in point being the dataset represented in Figure \ref{fig:calib_vs_class_selected:vote}.

The performance of cost-complexity pruned decision trees compared to pre-pruned trees trained in this experiment was examined using a combination of PDE and classification loss metrics. 
The outcomes varied considerably across the different datasets. In 11 datasets, such as those represented in Figures \ref{fig:calib_vs_class_selected:phishing} and \ref{fig:calib_vs_class_selected:news-popularity}, the performance of post-pruned trees was observed to be weaker. Conversely, in 9 datasets, like those represented in Figures \ref{fig:calib_vs_class_selected:hand-postures} and \ref{fig:calib_vs_class_selected:synthetic-5}, post-pruned trees exhibited superior performance. In the remaining 18 datasets, such as those shown in Figures \ref{fig:calib_vs_class_selected:vote} and \ref{fig:calib_vs_class_selected:sick}, the performance of the post-pruned trees remained largely unchanged. These findings suggest that while the effectiveness of cost-complexity pruning can vary based on the unique characteristics of each dataset, its overall impact seems to be quite subtle.

In conclusion, our experiment provides evidence supporting the existence of a tradeoff between calibration and classification performance in decision tree models. As decision trees increase in size and complexity, they generally become better at classification as long as they are not overfitted but worse in terms of calibration. It should be noted, however, that this relationship is not universal and can be influenced by dataset-specific factors. Additionally, The performance of a post-pruned tree is nearly identical to that of a decision tree of the same size, and this may be contingent on the distinct characteristics of the data.

% The comprehensive results from all datasets utilized in this experiment can be found in the Appendix, specifically in section \ref{sec:apx:calib_vs_classif}.

\begin{figure*}[ht]
    \centering
    \begin{subfigure}{0.45\textwidth}
        \centering
        \includegraphics[trim={0 0 0 0},clip,width=\textwidth]{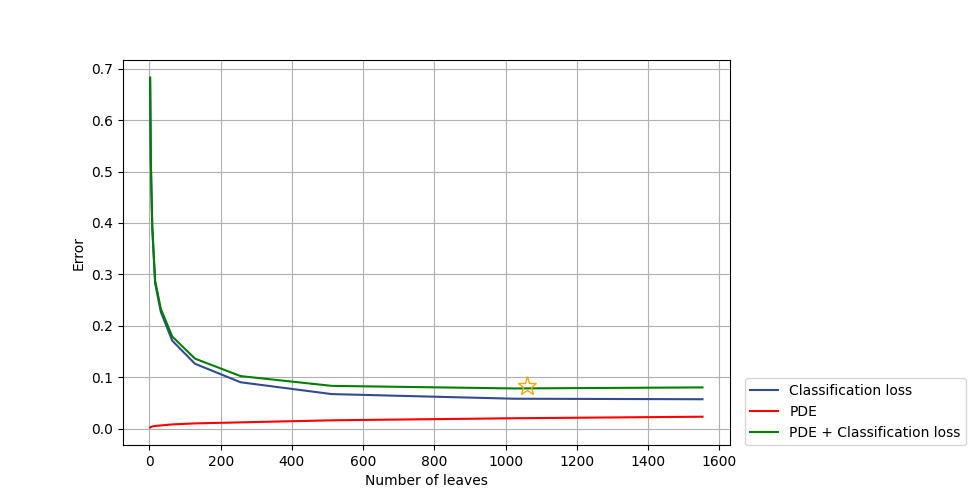}
        \caption{hand-postures\label{fig:calib_vs_class_selected:hand-postures}}
    \end{subfigure}
    \begin{subfigure}{0.45\textwidth}
        \centering
        \includegraphics[trim={0 0 0 0},clip,width=\textwidth]{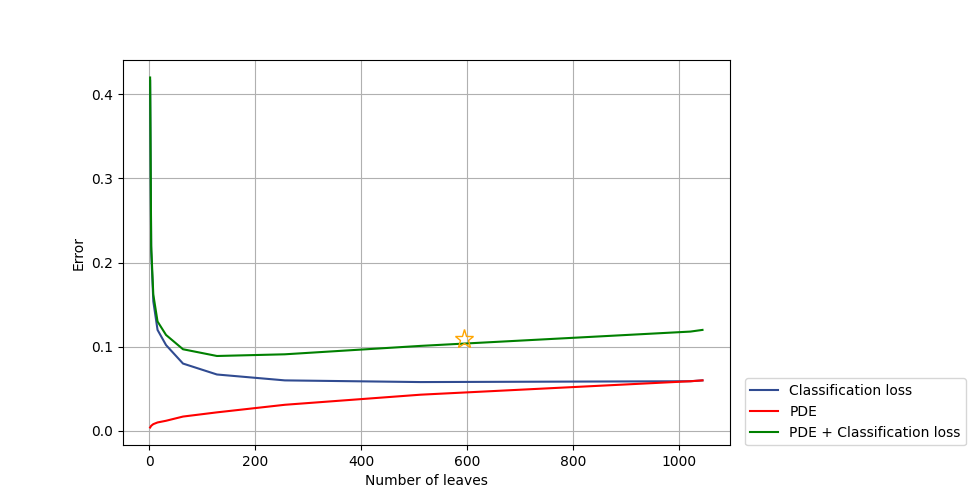}
        \caption{Synthetic-5 generator with 50,000 samples\label{fig:calib_vs_class_selected:synthetic-5}}
    \end{subfigure}
    \vspace{2em}
    
    \begin{subfigure}{0.45\textwidth}
        \centering
        \includegraphics[trim={0 0 0 0},clip,width=\textwidth]{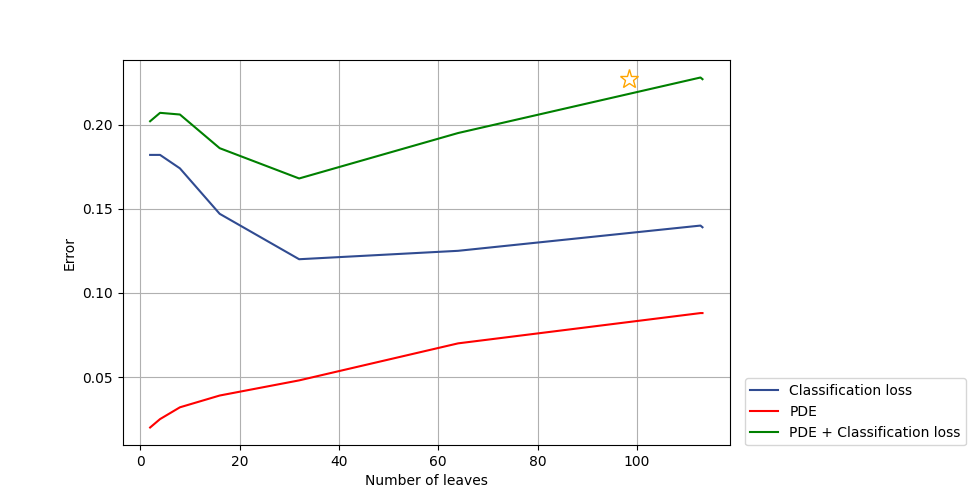}
        \caption{phishing\label{fig:calib_vs_class_selected:phishing}}
    \end{subfigure}
    \begin{subfigure}{0.45\textwidth}
        \centering
        \includegraphics[trim={0 0 0 0},clip,width=\textwidth]{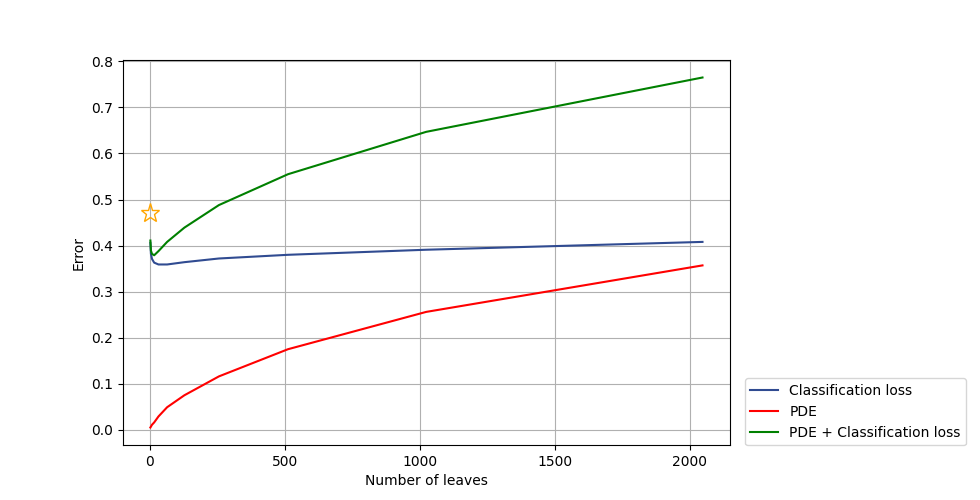}
        \caption{news-popularity\label{fig:calib_vs_class_selected:news-popularity}}
    \end{subfigure}
    \vspace{2em}
    
    \begin{subfigure}{0.45\textwidth}
        \centering
        \includegraphics[trim={0 0 0 0},clip,width=\textwidth]{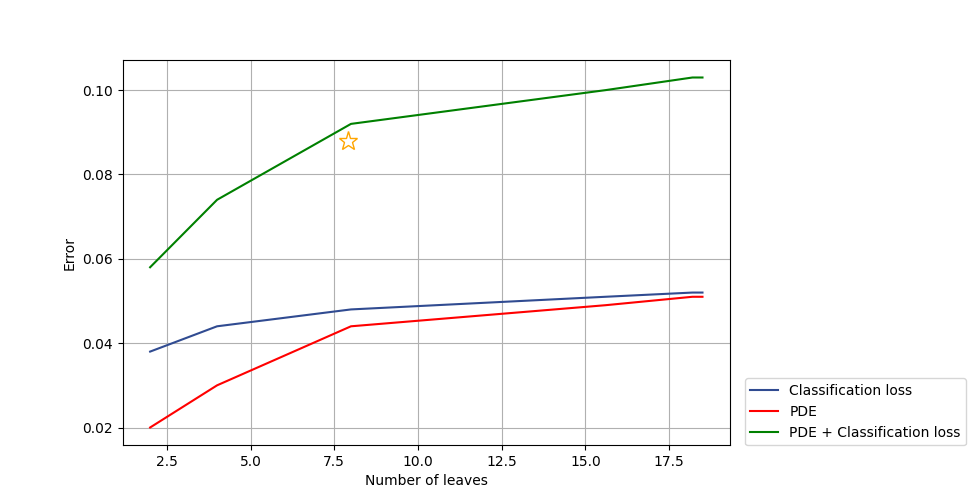}
        \caption{vote\label{fig:calib_vs_class_selected:vote}}
    \end{subfigure}
    \begin{subfigure}{0.45\textwidth}
        \centering
        \includegraphics[trim={0 0 0 0},clip,width=\textwidth]{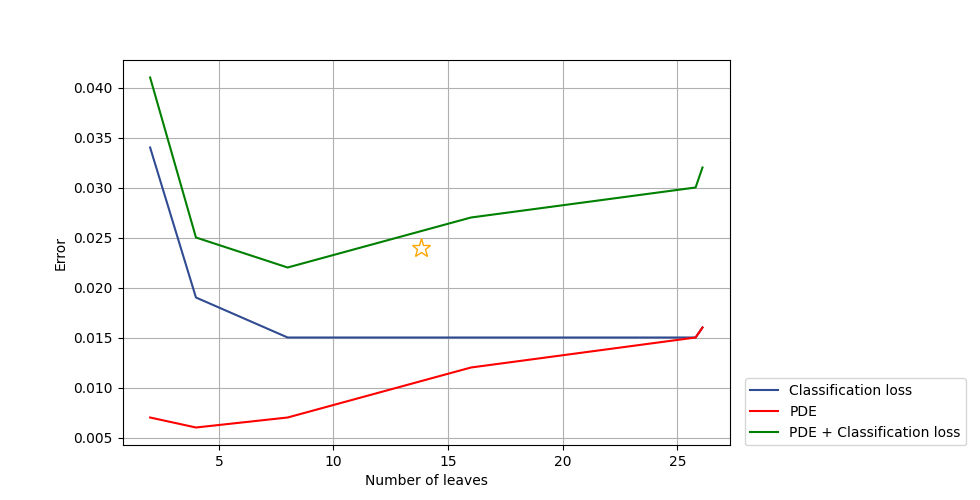}
        \caption{sick\label{fig:calib_vs_class_selected:sick}}
    \end{subfigure}
   
    \caption[Calibration-classification tradeoff over a selected datasets]{
    Evaluating decision trees of varying sizes across multiple datasets using probability deviation error (PDE) and classification loss. The star symbol (\faStarO) denotes the combined total of PDE and classification loss for the decision tree that has undergone cost-complexity post-pruning, also signifying its size.}
    \label{fig:calib_vs_class_selected}
\end{figure*}

\end{document}